\let\NAT@parse\undefined
\newtheorem{lemma}{Lemma}
\newtheorem{theorem}{Theorem}
\DeclareMathOperator*{\argmax}{arg\,max}
\DeclareMathOperator*{\argmin}{arg\,min}
\newcommand{\noboxorcid}[1]{%
  {\hypersetup{pdfborder={0 0 0}}\href{https://orcid.org/#1}{\orcidlink{#1}}}%
}
\title{BOW: Bayesian Optimization over Windows \\ for Motion Planning in Complex Environments}
\author{{Sourav Raxit$^{1}$\noboxorcid{0000-0003-1196-2435}}, {Abdullah Al Redwan Newaz$^{1^*}$\noboxorcid{0000-0003-1140-8119}}, {Paulo Padrao$^{3}$\noboxorcid{0000-0003-3966-0279}}, {Jose Fuentes$^{2}$\noboxorcid{0000-0002-6477-5820}}, and {Leonardo Bobadilla$^{2}$\noboxorcid{0000-0003-2097-2432}}
\thanks{Manuscript received: April 08 2025; Revised: June 20 2025 and July 26 2025; Accepted August 12 2025.}
\thanks{This paper was recommended for publication by associate editor Olivier Stasse upon evaluation of the Reviewers' comments.
This work is supported in part by the U.S. EPA grant BR-02F47801-5010M, NSF grants IIS-2024733, IIS-2331908, the ONR grant N00014-23-1-2789, the U.S. Dept. of Homeland Security grant 23STSLA00016-01-00, the U.S. DoD grant 78170-RT-REP, and the FDEP grant INV31.}
\thanks{$^{1}$ S. Raxit, and A. A. R. Newaz are with the Department of Computer Science, University of New Orleans, New Orleans, LA 70148, USA (email: \{sraxit, aredwann\}@uno.edu).
$^{2}$ J. Fuentes, and L. Bobadilla are with the School of Computing and Information Sciences, Florida International University, Miami, FL 33199, USA (email:
        \{jfuen099@, bobadilla@cs.\}fiu.edu).
$^{3}$ P. Padrao is with  Providence College, Department of Mathematics \& Computer Science, Providence,   RI 02918, USA (email:ppadraol@providence.edu).
        }
    \thanks{Digital Object Identifier (DOI): see top of this page.}
}
\begin{document}
    \maketitle
    \begin{abstract}
    This paper introduces the BOW Planner, a scalable motion planning algorithm designed to navigate robots through complex environments using constrained Bayesian optimization (CBO). Unlike traditional methods, which often struggle with kinodynamic constraints such as velocity and acceleration limits, the BOW Planner excels by concentrating on a planning window of reachable velocities and employing CBO to sample control inputs efficiently. This approach enables the planner to manage high-dimensional objective functions and stringent safety constraints with minimal sampling, ensuring rapid and secure trajectory generation. Theoretical analysis confirms the algorithm’s asymptotic convergence to near-optimal solutions, while extensive evaluations in cluttered and constrained settings reveal substantial improvements in computation times, trajectory lengths, and solution times compared to existing techniques. Successfully deployed across various real-world robotic systems, the BOW Planner demonstrates its practical significance through exceptional sample efficiency, safety-aware optimization, and rapid planning capabilities, making it a valuable tool for advancing robotic applications. The BOW Planner is released as an open-source package and videos of real-world and simulated experiments are available at \url{https://bow-web.github.io/}.
\end{abstract}

\begin{IEEEkeywords}
Motion and Path Planning; Collision Avoidance; Nonholonomic Motion Planning
\end{IEEEkeywords}
    \vspace{-10pt}
\section{Introduction}
\IEEEPARstart{M}{otion} planning for mobile robots in complex environments remains a challenging problem in robotics. The task can be formulated as a constrained optimization problem, where the objective is to minimize a cost function (e.g., time, energy) to reach a goal destination, subject to constraints such as obstacle avoidance and kinematic/dynamic feasibility~\cite{Jessica,castaman2021receding}. However, evaluating this constrained objective function over a long planning horizon is computationally expensive, especially for systems with complex dynamics and constraints~\cite{castaman2021receding}.

To address this issue, the Receding Horizon Controller (RHC) approach solves the optimization problem over a shorter, fixed horizon and the solution is applied for a single step before re-optimizing.
Early work in RHC highlighted the critical role of constraints in control processes by meticulously developing a model of the system~\cite{Mayne1991ModelPC}.
Later, a unified motion planning and control solution was introduced, capable of navigating environments with both static and dynamic obstacles~\cite{zhang_improved_2019}.
{
The Dynamic Window Approach (DWA)~\cite{fox1997dynamic} searches robot control commands directly in velocity space, constraining the search to dynamically reachable velocities over short temporal windows. The Hybrid Reciprocal Velocity Obstacle (HRVO)~\cite{snape2011hybrid} method complements this framework by providing closed-form characterization of collision-free velocities. HRVO shifts the velocity obstacle cone to account for mutual agent cooperation, enabling smooth, decentralized avoidance.  
}

{Recent advancements in motion planning include learning-based receding horizon planning~\cite{bergman2020optimization}, risk-aware planning~\cite{nyberg2021risk}, and perception-aware planning~\cite{Costante2016PerceptionawarePP} to enhance performance within computational constraints.
Deep learning-based motion planners fall into two categories: (i) end-to-end collision-free path generation and (ii) subsystem improvement methods~\cite{wang_survey_2021}. End-to-end approaches utilize 3D CNNs, PointNet++, and Transformers for robot locomotion and visuomotor control~\cite{carvalho2024motion, ichnowski2020deep, ren2022robot}. Subsystem frameworks like Dynamic MPNets~\cite{li2021mpc} combine neural planning with model predictive control for robust solutions in learned environments.
Deep learning-based motion planners require prior environment knowledge and struggle to balance optimality, safety, and computational efficiency in uncertain, dynamic environments.}


In motion planning, Bayesian optimization (BO) plays a crucial role in identifying safe and efficient paths by strategically reducing uncertainty within the search space. This is achieved through the combination of BO with Gaussian Processes (GP), which are widely used in high-dimensional motion planning to optimize expensive-to-evaluate black-box functions~\cite{quintero2021motion}. BO employs GP as a surrogate model to iteratively sample the objective function, construct a probabilistic model, and use an acquisition function to identify the next most promising point for evaluation~\cite{ungredda2021bayesian, Diessner2022InvestigatingBO}. Spatial GPs address the curse of dimensionality by selecting a small, spatially diverse subset of the dynamic window, thereby significantly reducing the sample space~\cite{Ali-RSS-23, wilson2012gaussian}, and they avoid overfitting by considering multiple potential curves for the underlying process. {Additionally, GP-based motion planning frameworks, such as GPMP and GPMP2, leverage sparse GP representations to model continuous-time trajectories and perform efficient probabilistic inference via factor graphs, enabling fast optimization under kinodynamic constraints~\cite{mukadam2018continuous}}. While these techniques perform well for unconstrained objectives, most motion planning problems require faster computation and multiple trajectory options under kinodynamic constraints. {To address these challenges, recent methods integrate GP models into model predictive control (MPC) frameworks to capture unmodeled dynamics and propagate uncertainty for safe control~\cite{hewing2019cautious}, while complementary approaches use Bayesian optimization to tune safety margins within MPC under uncertainty, effectively combining real-time constrained optimization with data-efficient learning for robust trajectory planning~\cite{andersson2016model}.}

 In this paper, we propose Bayesian Optimization over Windows (BOW), which incorporates the principles of the Dynamic Window Approach (DWA)~\cite{fox1997dynamic} into the Constrained Bayesian Optimization (CBO) framework~\cite{Gabler2022BayesianOW,ungredda2021bayesian}. Our approach fixes the control input over a short prediction horizon, serving two primary purposes: (i) ensuring computational tractability of the CBO-based optimization, and (ii) aligning with proven motion planning strategies that leverage short-horizon control to balance performance and feasibility.
Unlike earlier methods that rely on heuristics or predefined sampling strategies,  the BOW Planner’s use of the Constrained Expected Improvement (CEI) acquisition function allows it to learn constraints and objectives simultaneously.

The main contributions of this paper are:
(i) \textbf{Enhanced Sample Efficiency:} BOW can find optimal control policies with fewer evaluations, providing feasible solutions in significantly less time compared to existing methods.
(ii) \textbf{Safety-Integrated Optimization:} Safety constraints are directly incorporated into the optimization process, ensuring solutions are both optimal and feasible.
(iii) \textbf{Rapid Planning \& Scalability:} This method outperforms state-of-the-art planning algorithms and scales effectively to tackle high-dimensional motion planning, supporting complex robotic tasks with numerous optimization parameters.
(iv) \textbf{Theoretical Guarantees:} Beyond thorough benchmarking in many different cluttered environments, BOW offers proven guarantees of asymptotic convergence and near-optimal solutions, ensuring reliable performance in demanding settings.

    


    \section{Problem Formulation}
Consider a mobile robot with kinodynamic model operating in a $d$-dimensional environment $\mathcal{W} \subseteq \mathbb{R}^d$ containing $K$ unknown obstacles.
The robot's state space is denoted as $\mathcal{X} \subseteq \mathbb{R}^n$, and its control space is represented by $\mathcal{U} \subseteq \mathbb{R}^m$.


The robot's motion model describes how its state evolves over time based on the applied control inputs.
Given the robot's current state $\mathbf{x}_t \in \mathcal{X}$ at time $t$, and a control input $\mathbf{u}_t \in \mathcal{U}$, the robot's next state $\mathbf{x}_{t+1}$ is computed using the following: 
\begin{equation}
    \mathbf{x}_{t+1} = f(\mathbf{x}_t, \mathbf{u}_t), \label{eqn:motion_model}
\end{equation}
where $f: \mathcal{X} \times \mathcal{U} \to \mathcal{X}$ is the state transition function.


The objective of the motion planning problem is to find a collision-free trajectory $\mathbf{x}: \{0,\ldots, T\} \to \mathcal{X}_{\text{free}}$ that takes the robot from its initial state $\mathbf{x}_{\text{init}} \in \mathcal{X}$ to a desired goal state $\mathbf{x}_{\text{goal}} \in \mathcal{X}_{\text{goal}} \subset \mathcal{X}$, while satisfying the motion model constraints. Here, $\mathcal{X}_{\text{free}} \subseteq \mathcal{X}$ represents the obstacle-free region of the state space, and $T$ is the trajectory duration. Formally, the motion planning problem can be stated as:
\begin{equation}
    \begin{aligned}
        \text{Find} \quad & \mathbf{x}: \{0,\ldots, T\} \to \mathcal{X}_{\text{free}} \\
        \text{s.t.} \quad & \mathbf{x}_0 = \mathbf{x}_{\text{init}}, \,
         \mathbf{x}_T = \mathbf{x}_{\text{goal}}, \,
         \mathbf{x}_{t+1} = f(\mathbf{x}_{t}, \mathbf{u}_{t}).
    \end{aligned}
    \label{eqn:motion_planning_problem}
\end{equation}
To solve the motion planning problem, we can formulate it as an optimization problem by defining a cost-to-go function $J: \mathcal{X} \times \mathcal{U} \to \mathbb{R}$ that measures the quality of the robot's state and control inputs. The cost-to-go function can incorporate various objectives, such as minimizing the distance to the goal state, avoiding obstacles, or optimizing energy consumption. Additionally, we can impose constraints $c_k: \mathcal{X} \to \mathbb{R}$ to ensure the robot's safety and feasibility.
The optimization problem can be expressed as:
\begin{equation}
    \begin{aligned}
        \min_{\mathbf{u}_0, \ldots, \mathbf{u}_{T-1}} \quad &  \sum_{t=0}^{T-1} J(\mathbf{x}_t, \mathbf{u}_t) + J_F(\mathbf{x}_T)\\
        \text{s.t.} \quad &\mathbf{x}_{t + 1} = f(\mathbf{x}_t, \mathbf{u}_t),  \quad  t \in \{0, \ldots, T-1\}, \\
        & c_k(\mathbf{x}_t) \leq 0, \quad  k \in \{1,\ldots, K\}, \; t \in \{1,\ldots, T\},
    \end{aligned}
    \label{eqn:optimization_problem}
\end{equation}
where $K$ is the number of constraints.

In the context of online motion planning, the robot has a limited time budget $\Delta$ to compute its control inputs at each time step. The robot must optimize its trajectory by solving the optimization problem in~\eqref{eqn:optimization_problem} within the allocated time budget. The recommended control input is applied to the robot, and the process repeats at the next timestep.

    \section{BOW Planner}
Given a dynamic planning window $V_d \subset \mathcal{U}$ based on the robot's current velocity and acceleration limits~\cite{fox1997dynamic}, we need to evaluate the objective function $J$ and constraints $c_k$ for each control input $\mathbf{u}  \in V_d$ by simulating the robot's trajectory over a short time interval $[t, t+\Delta)\subseteq [0,T]$.

\paragraph{Reachability Objective}
To navigate to a desired goal state, we define the cost-to-go function by specifying an appropriate cost on the state and control, and solving for the optimal cost-to-go using the following function:
\begin{equation}
    J(\mathbf{x}_t, \mathbf{u}_t)  = \|\mathbf{x}_{\text{goal}} - f(\mathbf{x}_{t}, \mathbf{u}_t)\|= \|\mathbf{x}_{\text{goal}} - \mathbf{x}_{t+1}\|,
\end{equation}
where $\|\cdot\|$ is Euclidean norm between a predicted state $\mathbf{x}_{t+1} = f(\mathbf{x}_{t}, \mathbf{u}_t) $  and the goal state $\mathbf{x}_{\text{goal}} $.

To reduce the computational effort in evaluating the cost-to-go function over a short-horizon trajectory, we consider a subset of possible trajectories where the same control is applied, i.e., $\mathbf{u}_\tau = \mathbf{u}_t$ for $\tau = t,\ldots,t+\Delta$. The future cost incurred by being in a given state with $\Delta$ steps to go is then
\begin{equation}
    \mathcal{J}(\mathbf{x}_{[t:t+\Delta+1]}, \mathbf{u}_{[t:t+\Delta]}) = \sum_{\tau=t}^{t+\Delta} J(\mathbf{x}_\tau,\mathbf{u}_t) + J_F(\mathbf{x}_{t+\Delta+1}),
    \label{eqn:cost_to_go_one_control}
\end{equation}
where $J(\mathbf{x}_\tau,\mathbf{u}_t)$ is the stage cost and $J_F(\mathbf{x}_{t+\Delta+1})$ is the terminal cost. This serves as a proxy for the original cost function in the optimization problem \eqref{eqn:optimization_problem}, reducing the computational cost of evaluating the entire path.

Furthermore, using~\eqref{eqn:motion_model}, $\mathbf{x}_t$, and $\mathbf{u}_t$ to rewrite $\{\mathbf{x}_{t+\tau} \text{ for } \tau = 1,\ldots,\Delta\}$, it is possible to show that $\mathcal{J}$ depends only on $\mathbf{x}_t$ and $\mathbf{u}_t$ (with $J$, $J_F$, $f$, and $\Delta$ fixed). For that reason, we will use the shorthand $\mathcal{J}(\mathbf{x}_{[t:t+\Delta+1]}, \mathbf{u}_{[t:t+\Delta]})\coloneqq \mathcal{J}(\mathbf{x}_t, \mathbf{u}_t)$ from now on.

Note that like most local planners~\cite{fox1997dynamic,snape2011hybrid,williams2018information,mann2024control}, we assume the cost-to-go function in Eqn.~\eqref{eqn:cost_to_go_one_control} is admissible for a given planning horizon. 


\paragraph{Safety Constraints}
In the context of collision avoidance, the constraint functions $c_k$ can be defined to ensure that the robot does not collide with any obstacles. Let $\mathcal{O}_k$ represent the $k$-th obstacle in the environment. The collision constraint is defined as:
\begin{equation}
        c_k(\mathbf{x}_t) = r_{\text{safe}} - d(\mathbf{x}_t, \mathcal{O}_k),
        \label{eqn:constraints}
    \end{equation}
where $d(\mathbf{x}_t, \mathcal{O}_k)$ is the distance between the robot's state $\mathbf{x}_t$ and the obstacle $\mathcal{O}_k$, and $r_{\text{safe}}$ is a safety radius that ensures a buffer zone around the obstacle to prevent collisions.

\paragraph{Approximating the Objective and Constraint Functions}
We recall that at each timestep $t$, the function described in Equation~\eqref{eqn:cost_to_go_one_control} is optimized to find the next optimal control $\mathbf{u}^\star_t$. Since $\mathcal{J}$ may become expensive to compute, we utilize BO to guide this search efficiently. Let $  \mathcal{J}(\mathbf{u}) = \mathcal{J}(\mathbf{x}_t, \mathbf{u}) $, and $ c_k(\mathbf{u}) =c_k(f(\mathbf{x}_t, \mathbf{u}))$. To compute $\mathbf{u}^\star_t$, BO models the objective function and the constraints as GPs over the domain $ V_d $. We train the GPs by utilizing $p$ evaluations of the  objective function and constraints within $V_d$, constructing $K+1$ datasets: 

\begin{equation}
    \begin{aligned}
     D_\mathcal{J} &= \Big\{\big(\mathbf{u}^{(i)}_t, \mathcal{J}(\mathbf{x}_t, \mathbf{u}^{(i)}_t)\big)\Big\}_{i=1}^p , \\  D_{c_k} &= \Big\{\big(\mathbf{u}^{(i)}_t, c_k(f(\mathbf{x}_t, \mathbf{u}^{(i)}_t))\big)\Big\}_{i=1}^p, \; k = 1,\ldots,K.
    \end{aligned}
\end{equation}

Thus, the objective function and the constraints are approximated as GP realizations:

\begin{equation}
    \begin{aligned}
        \mathcal{J}(\mathbf{u}) &\sim \mathcal{GP}(\mu_\mathcal{J}(\mathbf{u}), \mathcal{K}_\mathcal{J}(\mathbf{u}, \mathbf{u}')) \\
        c_k(\mathbf{u}) &\sim \mathcal{GP}(\mu_{c_k}(\mathbf{u}), \mathcal{K}_{c_k}(\mathbf{u}, \mathbf{u}')), \; k = 1,\ldots, K.
    \end{aligned}
\end{equation}

Aside from approximating the objective function using GPs, BO samples promising points using an acquisition function that quantifies the probability of each sample to have a high value when evaluated by the objective function. In unconstrained scenarios, the standard Expected Improvement acquisition function $\text{EI}(\mathbf{u})$ is used as:

\begin{equation}
    \text{EI}(\mathbf{u}) = (y^- -\mu_\mathcal{J}( \mathbf{u}))\Phi(z) + \sigma_\mathcal{J}(\mathbf{u})\phi(z),
    \label{eqn:ei}
\end{equation}

where $y^-$ is the current best observed objective value;
$\mu_\mathcal{J}(\mathbf{u})$ is the posterior mean at $\mathbf{u}$;
$\sigma_\mathcal{J}(\mathbf{u}) = \sqrt{\mathcal{K}_\mathcal{J}(\mathbf{u}, \mathbf{u})}$ is the posterior standard deviation at $\mathbf{u}$;
$z = \frac{y^- -\mu_\mathcal{J}(\mathbf{u})}{\sigma_\mathcal{J}(\mathbf{u})}$.
$\Phi$ and $\phi$ are the CDF and the PDF of the standard normal distribution, respectively. This function provides a reasonable trade-off between exploration and exploitation.



{
Building upon the Constrained Expected Improvement (CEI) acquisition function~\cite{gelbart2014bayesian}, we developed a modified EI that incorporates control input constraints to achieve a balance between expected objective function improvement and constraint satisfaction probability as:
}

\begin{equation}
    \text{CEI}(\mathbf{u}) = \text{EI}(\mathbf{u})  P(\text{feasible} \mid \mathbf{u})\label{eqn:cei},
\end{equation}

where 
\begin{equation}
    P(\text{feasible} \mid \mathbf{u}) = \prod_{k=1}^K P(c_k(\mathbf{u}) \leq 0 \mid D_{c_k})
    \label{eqn:fesaible}
\end{equation}
is the probability that all constraints are satisfied at $\mathbf{u}$ assuming the constraints are conditionally independent given $\mathbf{u}$.

Furthermore, each of these probabilities has a closed form as 
\begin{equation}
    P(c_k(\mathbf{u}) \leq 0 \mid D_{c_k}) = \Phi\left( \frac{-\mu_{c_k}(\mathbf{u})}{\sigma_{c_k}(\mathbf{u})} \right),\label{eqn:cei_prob}
\end{equation}
where $\mu_{c_k}(\mathbf{u})$ and $\sigma_{c_k}(\mathbf{u})$ are the posterior mean and standard deviation of the GP for the $k$-th constraint at $\mathbf{u}$.


Finally, the optimal control input $\mathbf{u}^\star_t$ to evaluate is chosen by maximizing the CEI:

\begin{equation}
    \mathbf{u}^\star_t = \argmax_{\mathbf{u} \in V_d}\; \text{CEI}(\mathbf{u}).\label{eqn:optimal}
\end{equation}

Equation~\eqref{eqn:optimal} ensures that the optimization process not only seeks to improve the objective function but also respects the constraints by considering the probability of feasibility.

\subsection{Receding Horizon Planning and Control}
We employ a receding horizon optimization setup, where control actions are determined iteratively to guide the robot toward its goal. Algorithm~\ref{alg:BOW_Planning} details the BOW planning procedures. The process begins by initializing an empty final trajectory, which will store the sequence of feasible states the robot follows. The algorithm then enters a continuous loop that persists until the robot reaches the goal, defined by a specified goal radius $r$.

\begin{algorithm}[h!]
    \caption{BOW Planning and Control}
    \label{alg:BOW_Planning}
    \begin{algorithmic}[1]
        \STATE \textbf{Input:} Initial state $\mathbf{x}_0$, Goal state $\mathbf{x}_{\text{goal}}$, Parameters $\xi$, Goal distance $r$
        \STATE \textbf{Output:} Final trajectory $\mathbf{x}_{[0:t]}$, Optimal controls $\mathbf{u}_{[0:t-1]}^\star$
        \STATE Initialize $t \gets 0$, $\mathbf{x}_t \gets \mathbf{x}_0$
        \WHILE{$\|\mathbf{x}_t - \mathbf{x}_{\text{goal}}\| \leq r$}
            \STATE Define planning window $V_d \subseteq \mathcal{U}$ using $f$ and $\xi$
            \STATE Sample $p$ controls $\{\mathbf{u}_t^{(i)}\}_{i=1}^p \subseteq V_d$
            \STATE Initialize datasets: $D_{\mathcal{J}}\gets \emptyset$ and $D_{c_k} \gets \emptyset$, for $k=1,\ldots,K$.
            \FOR{$i = 1,\ldots, p$}
                \STATE Compute $\mathcal{J}(\mathbf{x}_t, \mathbf{u}_t^{(i)})$ using Equation~\eqref{eqn:cost_to_go_one_control}
                \STATE Compute constraint values $c_k(f(\mathbf{x}_t,\mathbf{u}_t^{(i)})) $ for each $k=1,\ldots, K$ using Eqaution~\eqref{eqn:constraints}
                \STATE Update datasets: $D_\mathcal{J} \gets D_\mathcal{J} \cup \{(\mathbf{u}_t^{(i)}, \mathcal{J}(\mathbf{x}_t, \mathbf{u}_t^{(i)}))\}$ and $D_{c_k} \gets D_{c_k} \cup \{\mathbf{u}_t^{(i)}, c_k(f(\mathbf{x}_t,\mathbf{u}_t^{(i)}))\}$ 
            \ENDFOR
            \STATE Train the $K+1$ GPs using $D_\mathcal{J}$ and $D_{c_k}$
            \STATE Compute CEI$(\mathbf{u})$ using Equations~\eqref{eqn:ei},\eqref{eqn:cei} and \eqref{eqn:cei_prob}
            \STATE Choose $\mathbf{u}_t^\star = \argmax_{\mathbf{u} \in V_d} \text{CEI}(\mathbf{u})$
            \STATE Generate trajectory $\hat{\mathbf{x}}_{[t:t+\Delta]} = (\hat{\mathbf{x}}_{t},\hat{\mathbf{x}}_{t+1} \dots, \mathbf{x}_{t+\Delta})$ using RK4 on $f$, $\mathbf{u}_t^\star$ and $\mathbf{x}_{t}= \hat{\mathbf{x}}_t$
            \STATE Apply control $\mathbf{u}_t^\star$ for $t+\ell$ and obtain trajectory $\hat{\mathbf{x}}_{[t:t+\ell]}$ whereas $\ell \leq \Delta$
            \IF{$c_k(\mathbf{x}_{\tau})\leq0$ for $\tau = t,\ldots,t+\ell$, $k=1,\ldots,K$}
                \STATE $\mathbf{x}_{t}\gets \hat{\mathbf{x}}_{t+\ell}$
                \STATE $t \gets t + \ell$
            \ENDIF
            
        \ENDWHILE
        \STATE \textbf{return} $\mathbf{x}_{[0:t]}$, $\mathbf{u}_{[0:t]}^\star$
    \end{algorithmic}
\end{algorithm}

At each iteration (line 4), the BOW Planner is instantiated with the current state $\mathbf{x}_t$, the goal $\mathbf{x}_{\text{goal}}$, a collision checker, and a set of parameters $\xi$. This planner leverages the CBO approach to compute the optimal control $\mathbf{u}^*_t$.

As shown in lines 5--15, $p$ control candidates are sampled from a local planning window $V_d$, such that $\{\mathbf{u}_t^{(i)}\}_{i=1}^p \subseteq V_d$. The cost and constraint satisfaction of each control $\mathbf{u}_t^{(i)}$ are evaluated using Equations~\eqref{eqn:cost_to_go_one_control}--\eqref{eqn:constraints}. The CEI acquisition function facilitates effective constraint handling by modeling both the objective and constraint functions via GPs, ensuring that the selected control satisfies feasibility conditions such as obstacle avoidance.

Using the optimal control $\mathbf{u}^*_t$, a receding horizon trajectory is generated via the motion model at line 16, using fourth-order Runge-Kutta (RK4) integration. This trajectory consists of a sequence of predicted states over the planning horizon. The algorithm then verifies whether this trajectory is collision-free using a collision checker.

If the trajectory is valid, the algorithm determines an index $\ell$ (line 17), which indicates how far into the planning horizon the same control $\mathbf{u}^\star_t$ can be applied. This allows the method to extend control application over multiple future steps, offering greater flexibility than traditional Model Predictive Control (MPC), which typically only applies the immediate control at $t+1$.



{
In practical implementation with finite samples, a single evaluation of the CEI in Eqn.~\eqref{eqn:optimal} generates solutions that formulate collision avoidance as soft chance constraints, thereby identifying candidate points that simultaneously enhance the objective function while maintaining high probability of safety compliance. The BOW planner employs a trajectory validation mechanism at line 18, which evaluates the complete predicted trajectory $\mathbf{x}_\tau$ across the time horizon $\tau = t, \ldots, t + \ell$. Upon successful validation where the trajectory satisfies all imposed constraints $c_k$ and maintains collision-free operation throughout the prediction horizon, the algorithm updates the current state $\mathbf{x}_t$ to the predicted state at index $\ell$, effectively advancing the planning window. Conversely, when the predicted trajectory fails to meet the feasibility criteria, the planner initiates an iterative sampling process for alternative control inputs, continuing until either a feasible trajectory is identified or the predetermined maximum iteration threshold is exceeded.
}

The loop proceeds by evaluating the distance between the updated state $\mathbf{x}_t$ and the goal $\mathbf{x}_{\text{goal}}$. If this distance is less than or equal to the threshold $r$ (line 20), the algorithm terminates, indicating that the robot has successfully reached its target. Upon termination, the algorithm returns a success flag and the final trajectory, representing the complete path from the initial state to the goal.


    \section{Algorithm Analysis}


\begin{lemma} 
    
    Let \( g: V_d \subseteq \mathcal{U} \to \mathbb{R} \) be a Lipschitz continuous function on a compact set \( V_d \). If a GP with an universal kernel (e.g., squared exponential kernel, see \cite{micchelli2006universal}) models \( g \), then the posterior mean \( \mu(\mathbf{u}) \) converges uniformly to \( g(\mathbf{u}) \) as the number of samples tends to infinity.
\end{lemma}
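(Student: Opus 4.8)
The plan is to combine three facts. First, universality of the kernel: by the characterization in~\cite{micchelli2006universal}, a universal kernel on the compact set $V_d$ has an RKHS $\mathcal{H}_k$ that is dense in $C(V_d)$ under the supremum norm, and the kernel is continuous, so $k(\mathbf{u},\mathbf{u}) - 2k(\mathbf{u},\mathbf{v}) + k(\mathbf{v},\mathbf{v}) \to 0$ as $\|\mathbf{u}-\mathbf{v}\|\to 0$. Second, regularity of the target: since $g$ is $L$-Lipschitz on the compact set $V_d$, it is bounded and uniformly continuous, hence $g\in C(V_d)$, and for every $\epsilon>0$ there is $g_\epsilon\in\mathcal{H}_k$ with $\|g-g_\epsilon\|_\infty\le\epsilon$. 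Third, a space-filling property of the sampling in Algorithm~\ref{alg:BOW_Planning}: the fill distance $h_n\coloneqq\sup_{\mathbf{u}\in V_d}\min_{i\le n}\|\mathbf{u}-\mathbf{u}^{(i)}\|$ tends to $0$ as the number of samples $n\to\infty$, which holds for the quasi-uniform, full-support designs used in practice.

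Next I would use the closed form of the posterior mean $\mu_n(\mathbf{u})=\mathbf{k}_n(\mathbf{u})^\top(\mathbf{K}_n+\sigma_n^2 I)^{-1}\mathbf{y}_n$ together with its variational characterization as the kernel ridge regression estimator $\mu_n=\argmin_{f\in\mathcal{H}_k}\frac1n\sum_{i=1}^n(f(\mathbf{u}^{(i)})-g(\mathbf{u}^{(i)}))^2+\lambda_n\|f\|_{\mathcal{H}_k}^2$ with $\lambda_n=\sigma_n^2/n\to 0$. For arbitrary $\mathbf{u}\in V_d$, pick a nearest design point $\mathbf{u}^{(j)}$ so that $\|\mathbf{u}-\mathbf{u}^{(j)}\|\le h_n$, and split $|\mu_n(\mathbf{u})-g(\mathbf{u})|\le|\mu_n(\mathbf{u})-\mu_n(\mathbf{u}^{(j)})|+|\mu_n(\mathbf{u}^{(j)})-g(\mathbf{u}^{(j)})|+|g(\mathbf{u}^{(j)})-g(\mathbf{u})|$. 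The third term is at most $Lh_n$ by Lipschitzness; the second is the data residual of the ridge estimator, which vanishes as $\lambda_n\to0$; and the first is bounded, via the reproducing property, by $\|\mu_n\|_{\mathcal{H}_k}\sqrt{k(\mathbf{u},\mathbf{u})-2k(\mathbf{u},\mathbf{u}^{(j)})+k(\mathbf{u}^{(j)},\mathbf{u}^{(j)})}$, i.e.\ by $\|\mu_n\|_{\mathcal{H}_k}$ times a kernel modulus of continuity at distance $h_n$. Taking the supremum over $\mathbf{u}$ and letting $n\to\infty$ then yields uniform convergence.

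The main obstacle is that a generic Lipschitz function need not lie in the (small) native space of the squared-exponential kernel, so one cannot bound $\|\mu_n\|_{\mathcal{H}_k}$ by $\|g\|_{\mathcal{H}_k}$; the norm of the fitted mean and the data-fitting error must be traded off along the regularization path. I would resolve this by inserting $g_\epsilon$ as a competitor in the ridge objective to get $\|\mu_n\|_{\mathcal{H}_k}^2\le\|g_\epsilon\|_{\mathcal{H}_k}^2+\mathcal{O}(\epsilon^2/\lambda_n)$ and residuals of order $\epsilon^2+\lambda_n\|g_\epsilon\|_{\mathcal{H}_k}^2$, then choosing $\lambda_n\to0$ slowly enough relative to the rate at which $h_n\to0$ that $\|\mu_n\|_{\mathcal{H}_k}$ times the kernel modulus at $h_n$ still vanishes; sending $n\to\infty$ and then $\epsilon\to0$ gives $\sup_{\mathbf{u}\in V_d}|\mu_n(\mathbf{u})-g(\mathbf{u})|\to 0$. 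A cleaner endgame, once $\sup_n\|\mu_n\|_{\mathcal{H}_k}<\infty$ is available, is an Arzela--Ascoli argument: the reproducing property makes $\{\mu_n\}$ uniformly bounded and equicontinuous, so every subsequence has a uniform limit; that limit agrees with $g$ on the dense set of design points, hence everywhere by continuity; since every subsequence has a sub-subsequence converging uniformly to $g$, the whole sequence does. Alternatively, one may invoke known universal-consistency results for kernel ridge / GP regression with universal kernels on compact domains, specialized to the noise-free, space-filling regime relevant here.
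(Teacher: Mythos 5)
Your route is genuinely different from the paper's: the paper disposes of this lemma in two sentences, citing \cite{micchelli2006universal} for density of the RKHS in $C(V_d)$ and \cite{vanzanten2008} for convergence of the posterior, whereas you attempt a self-contained argument via the kernel-ridge-regression characterization of the posterior mean, a fill-distance assumption, and a three-term split. Two things you do are genuinely valuable and go beyond the paper: you make explicit the space-filling hypothesis $h_n \to 0$ that the lemma statement silently needs (in the paper's Theorem 1 this is supplied by the density of EI iterates via Vazquez--Bect), and you correctly identify the central obstruction, namely that a merely Lipschitz $g$ lies outside the native space of the squared exponential kernel, so $\|\mu_n\|_{\mathcal{H}_k}$ cannot be bounded by $\|g\|_{\mathcal{H}_k}$.

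However, two steps of your argument are gaps rather than proofs. First, the middle term $|\mu_n(\mathbf{u}^{(j)})-g(\mathbf{u}^{(j)})|$: inserting $g_\epsilon$ into the ridge objective controls only the \emph{average} squared residual, $\frac1n\sum_i(\mu_n(\mathbf{u}^{(i)})-g(\mathbf{u}^{(i)}))^2 \le \epsilon^2+\lambda_n\|g_\epsilon\|_{\mathcal{H}_k}^2$, so the residual at the particular nearest design point can be of order $\sqrt{n}$ times larger and need not vanish; with $\lambda_n=\sigma^2/n$ the resulting pointwise bound actually diverges. The same missing pointwise convergence at design points is what your Arzel\`a--Ascoli endgame relies on to identify the subsequential limit with $g$, so that route does not escape the problem either (it would be clean only in the exactly interpolating, noise-free case $\sigma_n=0$). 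Second, the required balance $\|\mu_n\|_{\mathcal{H}_k}\cdot\omega_k(h_n)\to 0$ is asserted, not established: your competitor bound gives $\|\mu_n\|_{\mathcal{H}_k}^2\le\|g_\epsilon\|_{\mathcal{H}_k}^2+\epsilon^2/\lambda_n$, and for the squared exponential kernel $\|g_\epsilon\|_{\mathcal{H}_k}$ can grow extremely fast as $\epsilon\to 0$ (the RKHS consists of analytic functions), so ``choose $\lambda_n\to 0$ slowly enough relative to $h_n$'' demands quantitative estimates you do not supply. Your fallback of invoking universal-consistency results is essentially what the paper does by citing \cite{vanzanten2008}, but note that the standard consistency theorems for universal kernels give convergence in $L^2(P)$, not the uniform convergence the lemma claims, so even that citation needs the sup-norm versions of the contraction results.
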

\begin{proof}[Sketch of the Proof]
    \cite{micchelli2006universal} shows that the set of functions that the squared exponential kernel generates is dense in $C(V_d)$. Moreover, \cite{vanzanten2008} shows that the error, expressed in some norm defined on $V_d$, converges to zero when the number of samples tends to infinity.

\end{proof}


\begin{theorem}
    Let \( V_d \subseteq \mathcal{U} \) be a compact planning window defined by the robot's velocity and acceleration limits. Suppose the objective function \( \mathcal{J}(\mathbf{u})\) and constraint functions \( c_k(\mathbf{u}) \) for \( k = 1, \dots, K \) are Lipschitz continuous over \( V_d \). Assume the system dynamics \( f \) are Lipschitz continuous, and the GP models \( \mathcal{J} \) and \( c_k \) use kernels such as the squared exponential kernel. Then, as the number of samples tends to infinity, the control input \( \mathbf{u}^\star_t \) selected by maximizing the CEI converges to the true optimal control input \( \mathbf{u}_{t,\text{opt}} = \argmin_{\mathbf{u} \in V_d} \mathcal{J}(\mathbf{u}) \) within \( V_d \) while satisfying all constraints \( c_k(\mathbf{u}) \leq 0 \).
\end{theorem}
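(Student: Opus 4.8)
The plan is to reduce the convergence of the CEI maximizer to the uniform convergence of the underlying GP posterior means, then combine this with compactness of $V_d$ and continuity of the objective and constraints to pass from "near-optimal on the estimated problem" to "near-optimal on the true problem." First I would apply Lemma~1 to each of the $K+1$ GPs: since $\mathcal{J}$ and each $c_k$ are Lipschitz continuous on the compact set $V_d$ (the latter following from Lipschitz continuity of $d(\cdot,\mathcal{O}_k)$ composed with the Lipschitz dynamics $f$), the posterior means $\mu_\mathcal{J}^{(p)}$ and $\mu_{c_k}^{(p)}$ converge uniformly on $V_d$ to $\mathcal{J}$ and $c_k$ as the sample count $p\to\infty$. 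I would additionally invoke the standard fact (for instance from the consistency results in \cite{vanzanten2008}) that the posterior variances $\sigma_\mathcal{J}^{(p)}(\mathbf{u})^2$ and $\sigma_{c_k}^{(p)}(\mathbf{u})^2$ shrink to zero uniformly as sampling becomes dense in $V_d$, which is what actually drives the acquisition function to concentrate.

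Next I would analyze the limiting behavior of $\text{CEI}(\mathbf{u})$. Write $z = (y^- - \mu_\mathcal{J}(\mathbf{u}))/\sigma_\mathcal{J}(\mathbf{u})$; as $\sigma_\mathcal{J}(\mathbf{u})\to 0$ and $\mu_\mathcal{J}(\mathbf{u})\to\mathcal{J}(\mathbf{u})$, the EI term from \eqref{eqn:ei} behaves like $\max\{\,y^- - \mathcal{J}(\mathbf{u}),\,0\,\}$ up to vanishing error, and $y^-\to\min_{\mathbf{u}\in V_d}\mathcal{J}(\mathbf{u})$ once the samples are dense (the best observed value converges to the true minimum by continuity and density). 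Simultaneously, from \eqref{eqn:cei_prob}, $P(c_k(\mathbf{u})\le 0\mid D_{c_k}) = \Phi(-\mu_{c_k}(\mathbf{u})/\sigma_{c_k}(\mathbf{u}))$ tends to $1$ on the interior of the feasible set $\{c_k(\mathbf{u})<0\}$ and to $0$ on $\{c_k(\mathbf{u})>0\}$, so $P(\text{feasible}\mid\mathbf{u})$ acts in the limit as the indicator of the (closure of the) feasible region. Hence $\text{CEI}(\mathbf{u})$ asymptotically vanishes except near feasible points that attain the minimal objective value, and a standard argument-max stability argument on the compact set $V_d$ — any convergent subsequence of $\mathbf{u}^\star_t$ must land in the set of feasible minimizers, because otherwise CEI at that limit would be strictly dominated by CEI near a true minimizer for large $p$ — gives $\mathcal{J}(\mathbf{u}^\star_t)\to\min_{\mathbf{u}\in V_d}\mathcal{J}(\mathbf{u})$ with all constraints satisfied in the limit. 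If the minimizer $\mathbf{u}_{t,\text{opt}}$ is unique, this upgrades to $\mathbf{u}^\star_t\to\mathbf{u}_{t,\text{opt}}$.

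The main obstacle I anticipate is handling points exactly on the constraint boundary, where $c_k(\mathbf{u})=0$: there $\Phi(-\mu_{c_k}/\sigma_{c_k})\to\Phi(0)=\tfrac12$ rather than $0$ or $1$, so the limiting "feasibility indicator" is not clean on a measure-zero set, and one must argue that this does not spoil the argmax — this is fine provided the true optimizer is not isolated strictly inside an infeasible pocket, which I would encode as a mild constraint-qualification/regularity assumption (e.g., the feasible minimizer is in the closure of the feasible interior). A secondary subtlety is that the EI factor can in principle stay bounded away from zero at suboptimal feasible points with large residual variance early on; this is controlled by noting that once sampling is dense, variances are uniformly small, so the vanishing of EI at suboptimal points and its (normalized) dominance near the minimizer both hold uniformly. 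I would also flag that "converges" is meant in the asymptotic (infinite-sample) sense consistent with Lemma~1, matching the theorem's phrasing, and that in the finite-sample regime the algorithm's explicit trajectory-validation step (line 18) enforces hard feasibility regardless.
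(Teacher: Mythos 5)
Your proposal is correct and follows essentially the same route as the paper's proof: Lemma~1 gives uniform convergence of the posterior means with vanishing variances, the feasibility probability $\Phi(-\mu_{c_k}(\mathbf{u})/\sigma_{c_k}(\mathbf{u}))$ collapses to an indicator of the feasible set, the EI factor concentrates on the minimizer, and hence the CEI argmax converges to the constrained optimum. The only substantive differences are that the paper outsources the concentration of EI to a cited density/convergence result for the EI process where you argue it directly via an argmax-stability argument (so you should still justify that the sample sequence is in fact dense rather than assume it), and that you explicitly handle the boundary case $c_k(\mathbf{u})=0$ and the uniqueness caveat, both of which the paper's proof simply assumes away by positing strict feasibility.
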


\begin{proof}
    We prove this theorem in five steps, leveraging properties of GPs and BO under the given assumptions.

    \subsubsection*{Step 1: GP Approximation and Uniform Convergence}
    Since \( \mathcal{J}(\mathbf{u}) \) and \( c_k(\mathbf{u}) \) are Lipschitz continuous on the compact set \( V_d \), Lemma 1 guarantees that their GP models converge to the functions as the number of samples increases. 

    \subsubsection*{Step 2: Convergence of the Probability of Feasibility}
    For each constraint \( c_k(\mathbf{u}) \), the probability of feasibility is modeled as
    \begin{equation}
        P(c_k(\mathbf{u}) \leq 0 \mid D_{c_k}) = \Phi\left( \frac{-\mu_{c_k}(\mathbf{u})}{\sigma_{c_k}(\mathbf{u})} \right),
    \end{equation}
    recalling that \( \Phi \) is the standard normal CDF, and \( D_{c_k} \) is the dataset of constraint observations. As the number of samples tends to infinity, \( \mu_{c_k}(\mathbf{u}) \to c_k(\mathbf{u}) \) and \( \sigma_{c_k}(\mathbf{u}) \to 0 \). Thus,
    \[
        P(c_k(\mathbf{u}) \leq 0 \mid D_{c_k}) \to
        \begin{cases}
            1 & \text{if } c_k(\mathbf{u}) < 0, \\
            0.5 & \text{if } c_k(\mathbf{u}) = 0, \\
            0 & \text{if } c_k(\mathbf{u}) > 0.
        \end{cases}
    \]

    In the feasible region where \( c_k(\mathbf{u}) \leq 0 \) for all \( k \) (assuming strict feasibility \( c_k(\mathbf{u}) < 0 \) for simplicity), \( P(\text{feasible} \mid \mathbf{u}) \to 1 \) (recall Equation~\eqref{eqn:fesaible}). For infeasible \( \mathbf{u} \), where \( c_k(\mathbf{u}) > 0 \) for some \( k \), \( P(\text{feasible} \mid \mathbf{u}) \to 0 \).

    \subsubsection*{Step 3: Convergence of the EI process}
    \cite{VAZQUEZ20103088} shows that an EI process produces a dense set of samples that converges to $\argmin_{\mathbf{u} \in V_d} \mathcal{J}(\mathbf{u})$ when the number of samples tends to infinity.

    \subsubsection*{Step 4: Behavior of the CEI function}
    Recall that CEI combines EI with feasibility:
    \begin{equation}
        \text{CEI}(\mathbf{u}) = \text{EI}(\mathbf{u})  P(\text{feasible} \mid \mathbf{u}).
    \end{equation}
    As the number of samples tends to infinity, we combine the results of the two previous steps, then, the CEI acquisition function becomes

    \begin{equation}
        \text{CEI}(\mathbf{u}) = \text{EI}(\mathbf{u})\prod_{k=1}^K\mathbf{1}_{\{c_k(\mathbf{u})\leq 0\}}.
        \label{eqn:cei_restricted}
    \end{equation}

    \subsubsection*{Step 5: Convergence to the Optimal Solution}
    Direclty from Equation~\eqref{eqn:cei_restricted}, we have that
        \begin{multline}
        \mathbf{u}^\star_t = \argmax_{\mathbf{u} \in V_d} \text{CEI}(\mathbf{u})=
        \argmax_{\mathbf{u} \in V_d}\text{EI}(\mathbf{u})\prod_{k=1}^K\mathbf{1}_{\{c_k(\mathbf{u})\leq 0\}}\\ = \argmax_{\mathbf{u} \in V_d, \,c_k(\mathbf{u})\leq 0, \, \forall k }\mathcal{J}(\mathbf{u})=\mathbf{u}_{\text{opt}},
        \end{multline}
    completing the proof.
\end{proof}

\subsection{Computational Complexity of BOW Planner}
The BOW Planning algorithm iteratively computes an optimal control \( \mathbf{u}^\star_t \) using BO with GPs to generate collision-free trajectories over \( \psi \) iterations. With a single optimization iteration (\( \psi = 1 \)) and a constant number of GP samples \( p \), the complexity per iteration includes:  
\begin{itemize}
    \item \textbf{Control Optimization}: GP update costs \( O(p^3) \), a constant since {\( p \)} is fixed.  
    \item \textbf{Trajectory Generation}: RK4 integration over horizon \( \Delta \) with state dimension \( n \) costs \( O(\Delta n) \).  
    \item \textbf{Collision Checking}: 
    Collision checking of $\Delta$ points using a Bounding Volume Hierarchy (BVH) model achieves a per-query complexity of \( O(\log C) \), resulting in a total complexity of \( O(\Delta \log C) \), where \( C \) represents the number of geometric primitives in the hierarchy.
\end{itemize}
Total complexity is:
{\[
O(\psi  (p^3 + \Delta  (n + \log  C))) = O(\psi  \Delta  (n + \log  C))
\]}
where \( \psi \) is the number of iterations (problem-dependent), \( \Delta \) is the horizon length, \( n \) is the state dimension, and \( C \) is the collision checking complexity. 
    \section{Experiments and Results}

{
All experiments were conducted on a desktop computer running Ubuntu 22.04 LTS, equipped with an AMD Ryzen\textsuperscript{TM} 9 7950X processor, an NVIDIA GTX 4070 GPU, and 32\,GB of RAM. Robot localization was performed using a VICON motion capture system operating at 120\,Hz. The BOW Planner was configured with a maximum speed of 1.0\,m/s, a minimum speed of 0.0\,m/s, a maximum acceleration of 0.5\,m/s\textsuperscript{2}, a maximum yaw rate of 0.6981\,rad/s, and a maximum yaw acceleration of 2.0472\,rad/s\textsuperscript{2}. The $\ell$ was set to $7$. The planning operated with a time step of 0.1\,s and a prediction horizon of 3.0\,s. For trajectory optimization, we used a Squared Exponential ARD kernel (Radial Basis Function) within the Bayesian Optimization framework.
}

\vspace{-10pt}

\subsection{Sample Efficiency}
The sample efficiency property of the BOW planner is further illustrated in Fig.~\ref{fig:ablation_study}.
\begin{figure}[t]
    \centering
    \begin{subfigure}{0.20\textwidth}
        \centering
        \includegraphics[width=\textwidth, trim= 14.5cm 9cm 7.5cm 5cm, clip]{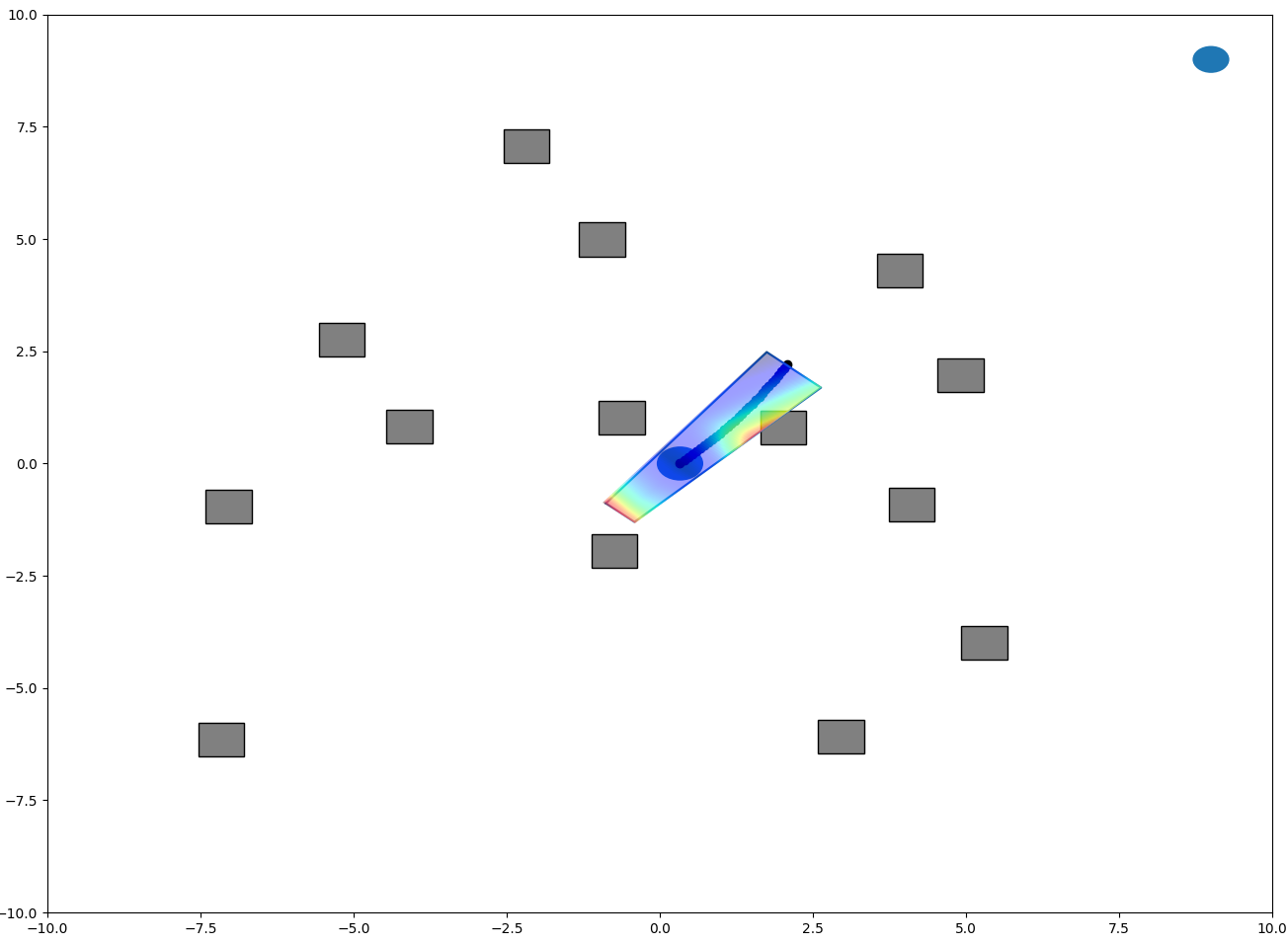}
        \caption{}
    \end{subfigure}
    ~
    \begin{subfigure}{0.20\textwidth}
        \centering
        \includegraphics[height=2.3cm]{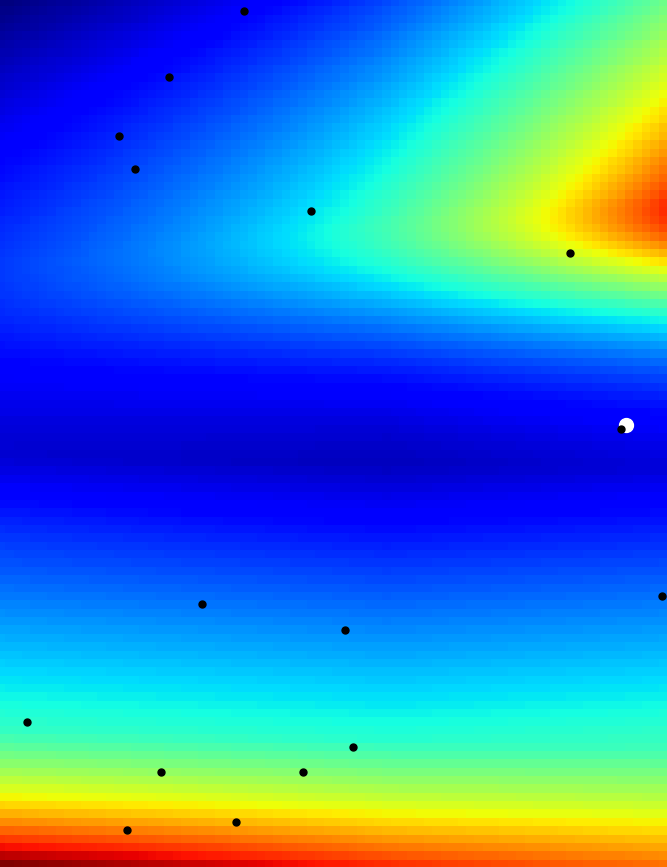}
        \caption{}
    \end{subfigure}
    \caption{\textbf{Sample Efficiency:} The DWA performs a grid search to assess the cost function (depicted by the colored map) using the dynamic window, while the BOW employs constrained Bayesian optimization to efficiently determine the optimal control (represented by white dots) with {$15$ samples only}  (indicated by black dots).}
  \label{fig:ablation_study}
  \vspace{-5pt}
  \end{figure}
Given a dynamic window, the DWA planner performs a grid search to compute the optimal solution over the cost function, depicted as a colored map. In contrast, the MPPI planner uniformly samples the cost function from both feasible (blue-colored region) and infeasible (obstacle regions) spaces.
Unlike other methods, the BOW planner efficiently computes the optimal solution (white dot) by evaluating samples from the feasible space more than the infeasible space, resulting in faster runtime efficiency. This demonstrates the superior sample efficiency of the BOW planner.
Fig.~\ref{fig:ablation_study} (b) illustrates the sampling behavior of the BOW planner. In this representation, the black dots indicate the samples taken by the BOW planner, while the white dot marks the optimal sample, and the colored map represents the cost map.
\subsection{Unmanned Ground Vehicle Experiments}
We implemented the BOW planner for a non-holonomic differential wheel UGV(Unmanned Ground Vehicle) navigating through a 2-D space containing obstacles. We performed simulations for UGVs across six different environments, as shown in Fig. \ref{fig:environments}. The UGV’s state is defined by a 5-D vector $\mathbf{x} = (x, y, \theta, v, \omega)$, encompassing its position $(x, y)$, orientation $\theta$, linear velocity $v$ and angular velocity $\omega$. The control input is represented as a 2-D vector $\mathbf{u} = (v_c, \omega_c)$, which includes the linear velocity $v_c$ and angular velocity $\omega_c$ with respect to the body frame.
\begin{figure}[h!]
  \centering
    \centering
      \begin{subfigure}[b]{\linewidth}
        \includegraphics[width=\linewidth, height=3.4cm]{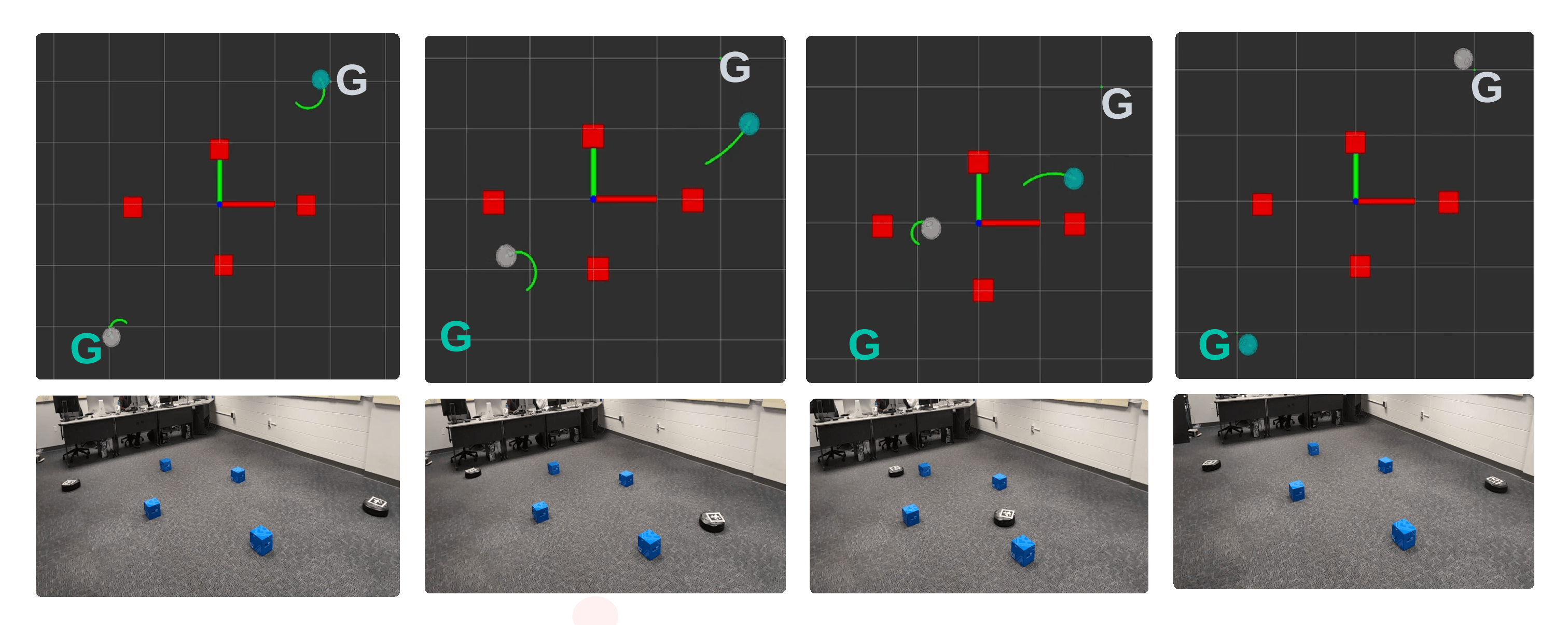}
        \centering
        \caption{}
        \label{fig:Dynamic_obstacles}
      \end{subfigure}
      \hspace{1pt}
    \centering
      \begin{subfigure}[b]{\linewidth}
        \includegraphics[width=0.97\linewidth, height=2.0cm]{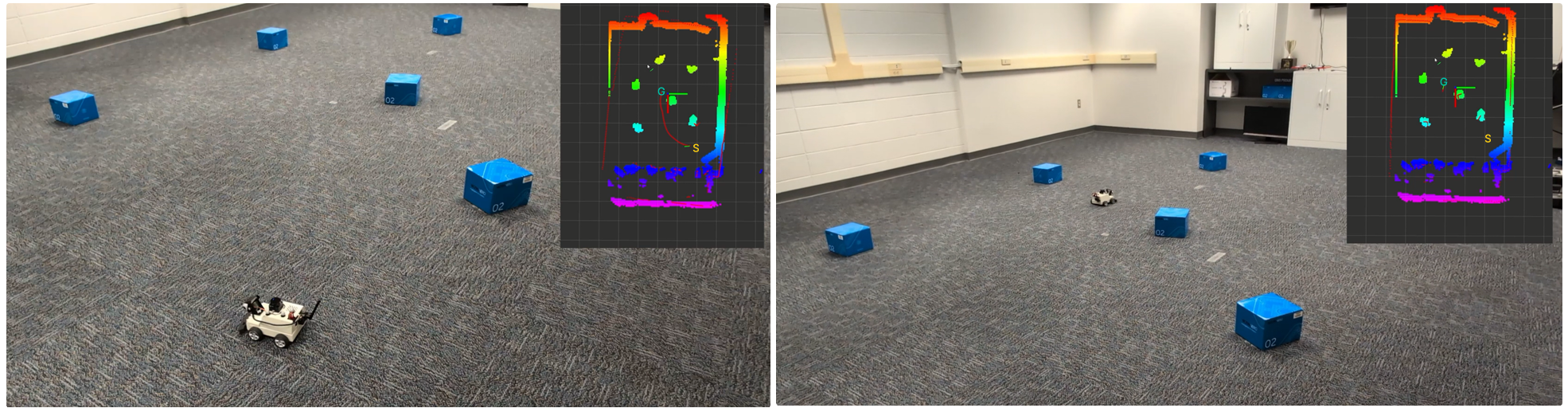}
        \centering
        \caption{}
        \label{fig:occupancy_map}
      \end{subfigure}
  \caption{
   The BOW planner is applied to UGVs for navigating environments with dynamic obstacles, leveraging its rapid planning capabilities~\ref{fig:Dynamic_obstacles}. The planner also has the ability to avoid obstacles using onboard LiDAR and to follow the planned trajectory~\ref{fig:occupancy_map}.}
   
\label{fig:s&d_obs}
\vspace{-5pt}
\end{figure}
\begin{figure*}[t]
    \centering
    \vspace{-20pt}
    \includegraphics[width=0.16\linewidth,viewport=0pt 0pt 900pt 900pt,clip]{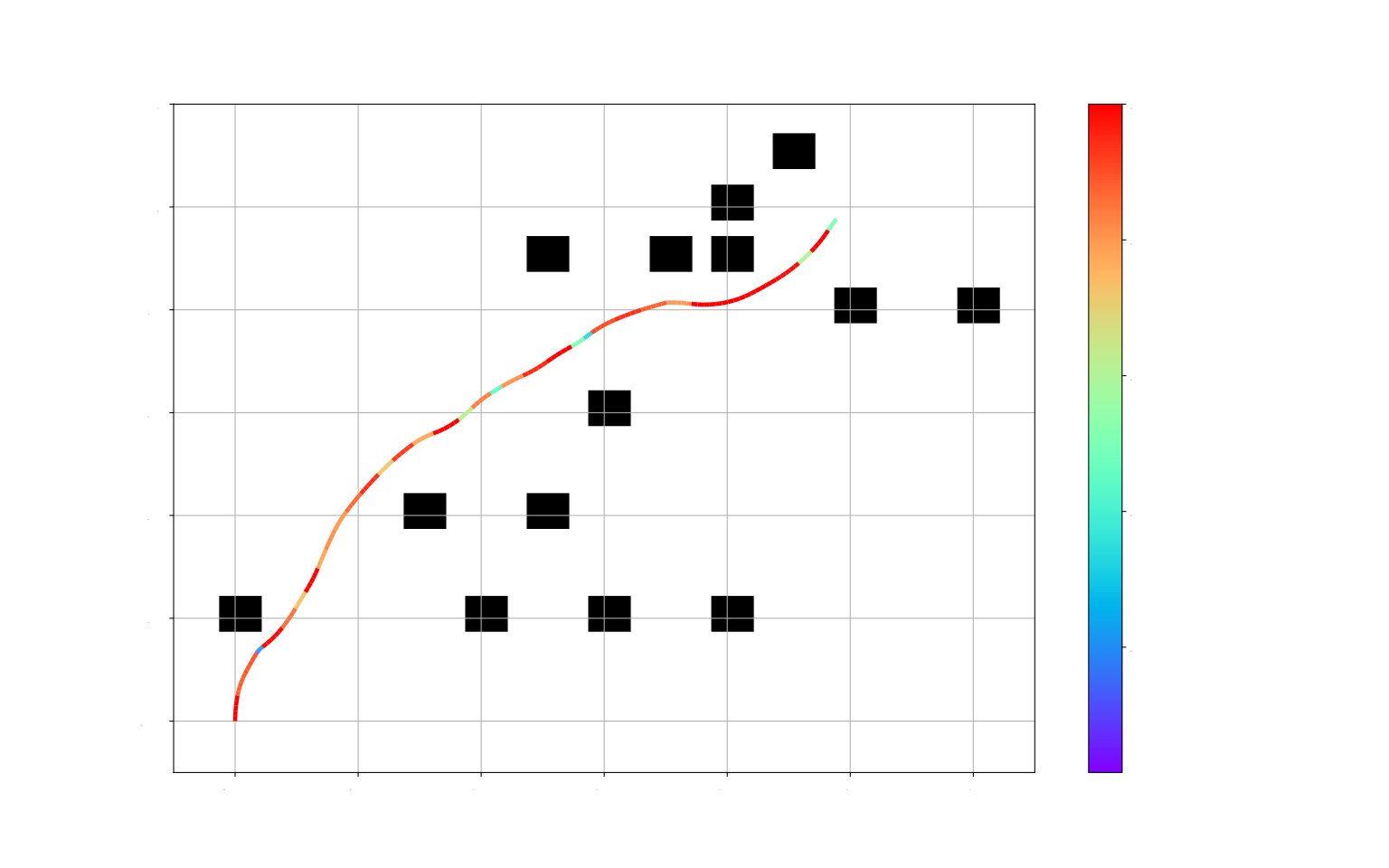} 
    \includegraphics[width=0.16\linewidth,viewport=0pt 0pt 900pt 900pt,clip]{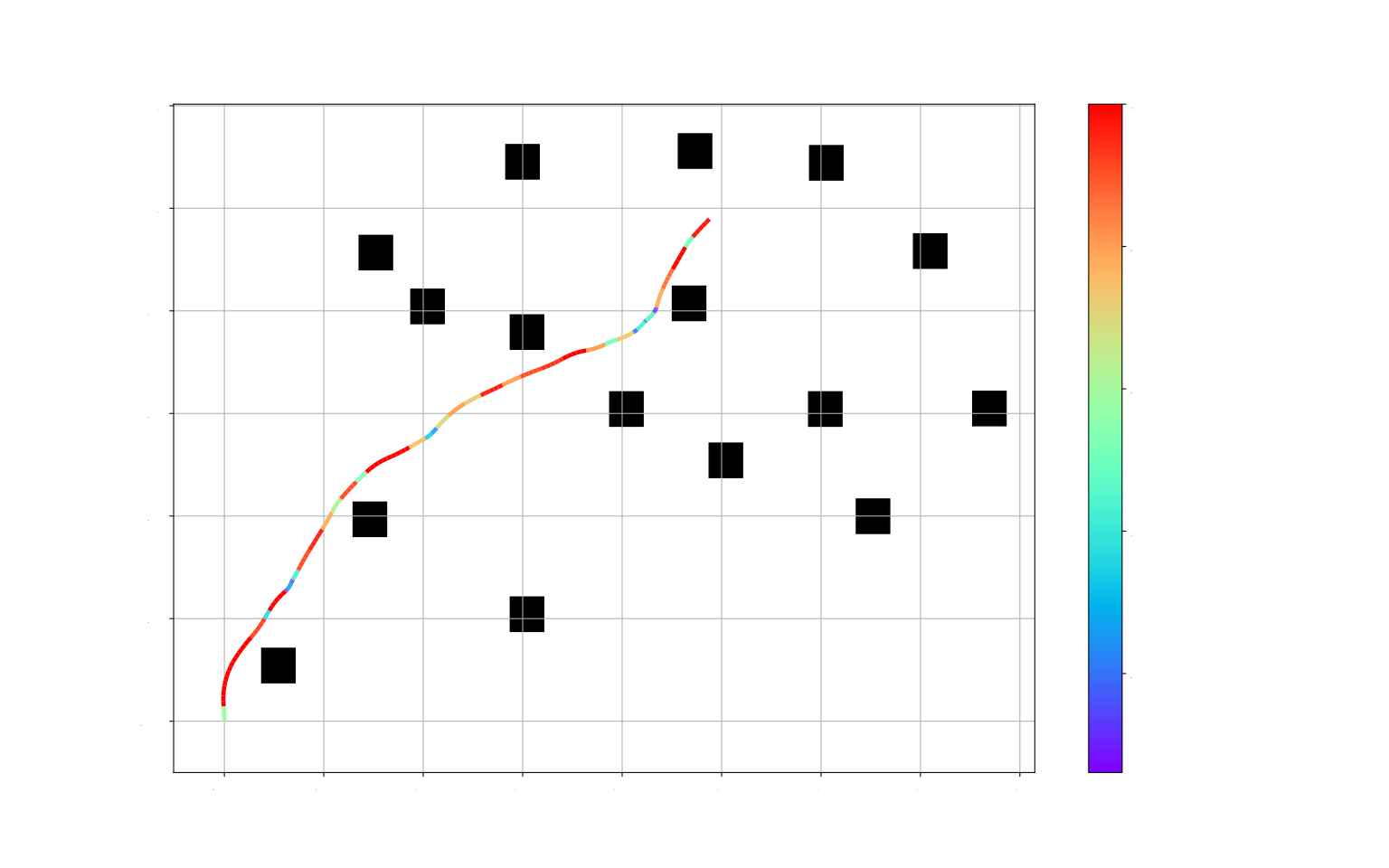} 
    \includegraphics[width=0.16\linewidth,viewport=0pt 0pt 900pt 900pt,clip]{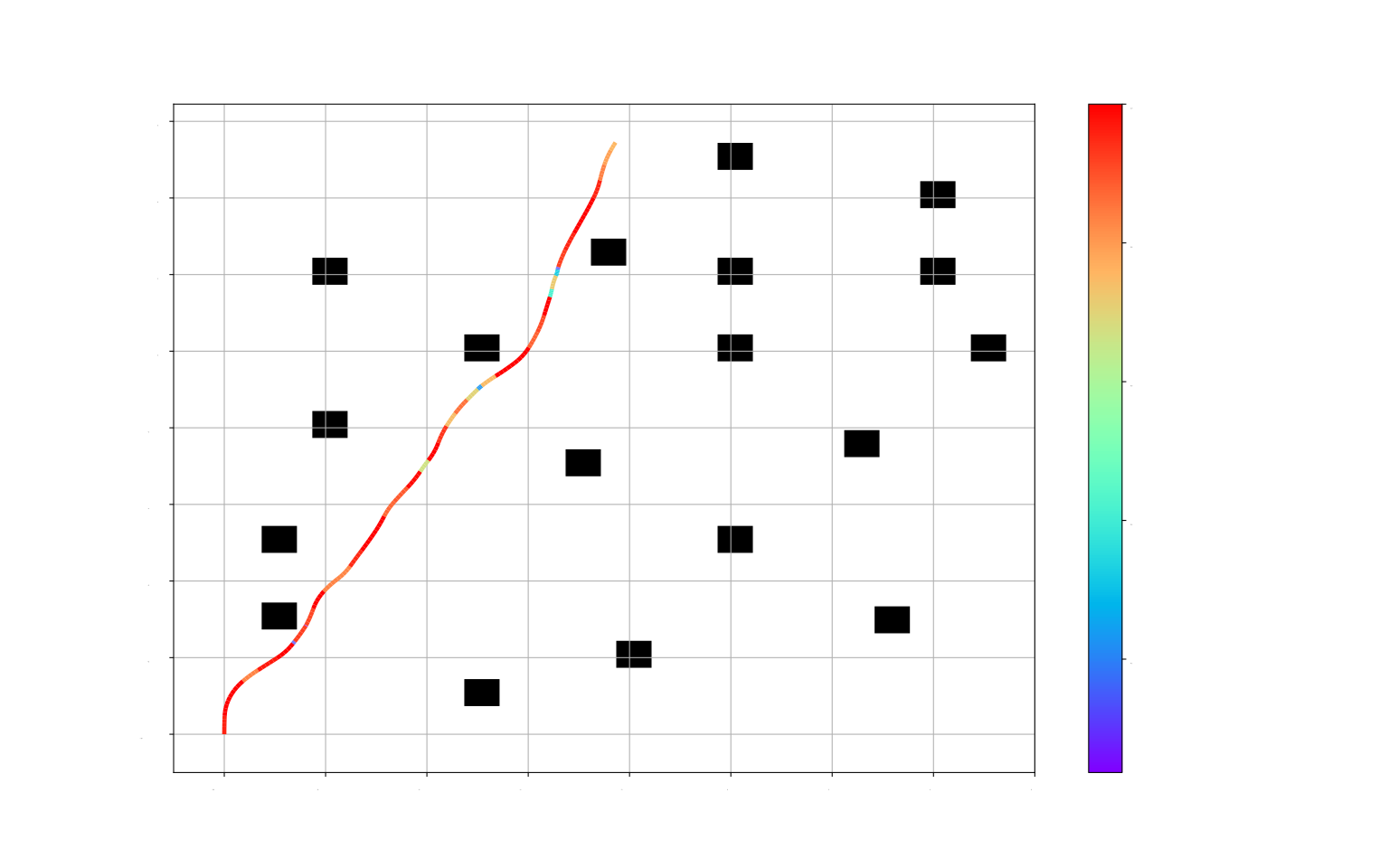} 
    \includegraphics[width=0.16\linewidth,viewport=0pt 0pt 900pt 900pt,clip]{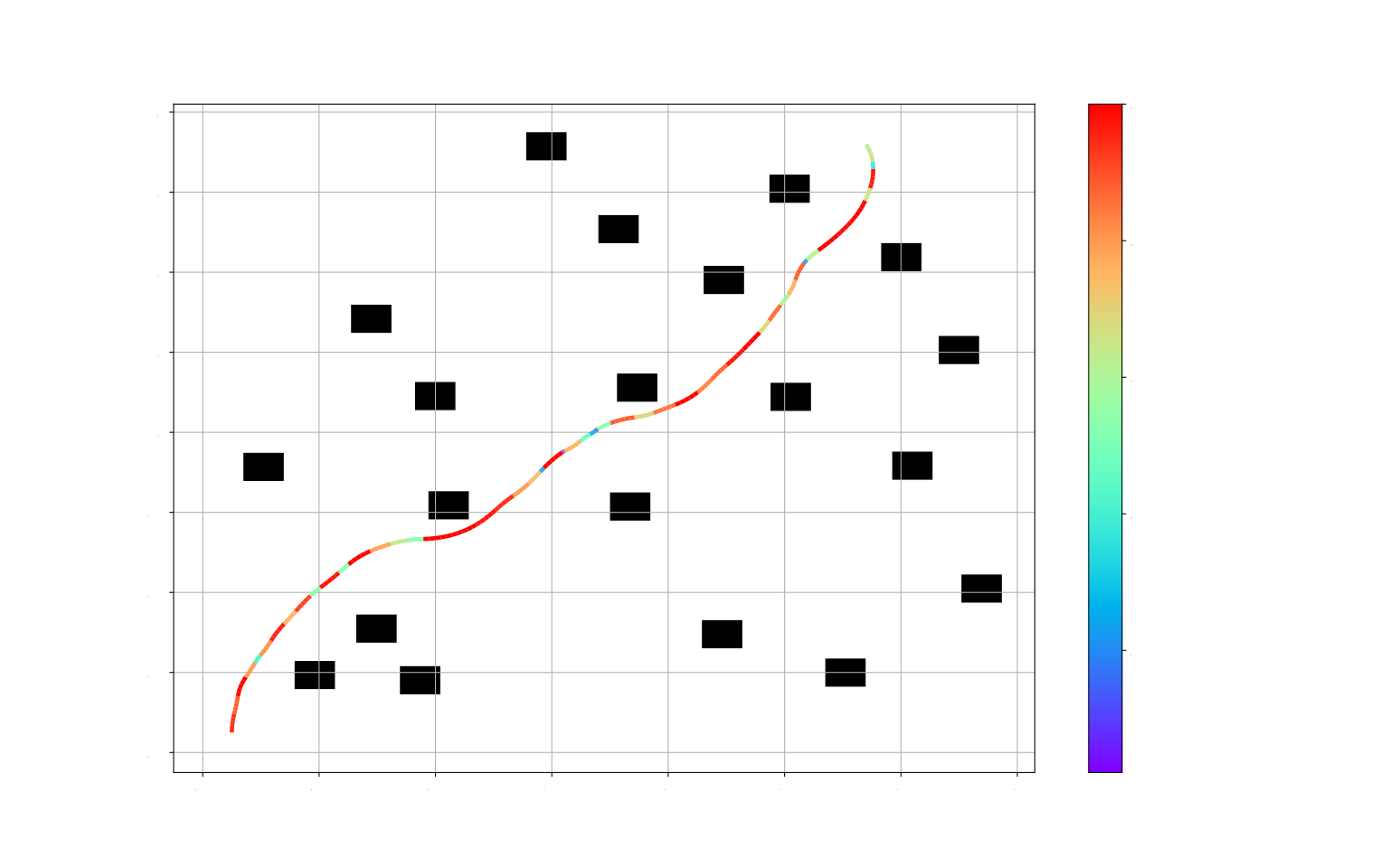} 
    \includegraphics[width=0.16\linewidth,viewport=0pt 0pt 900pt 900pt,clip]{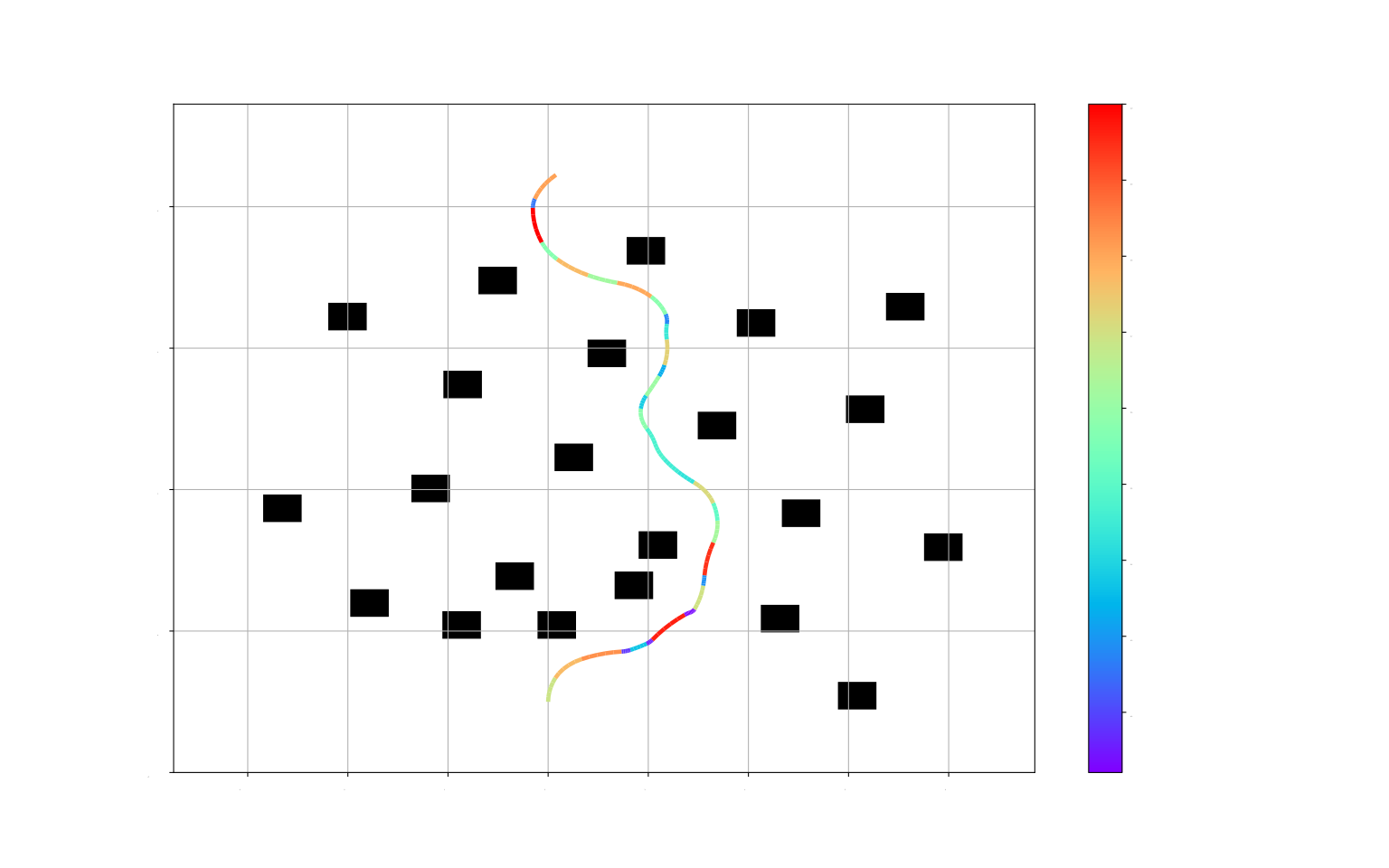} 
    \includegraphics[width=0.16\linewidth,viewport=0pt 0pt 900pt 900pt,clip]{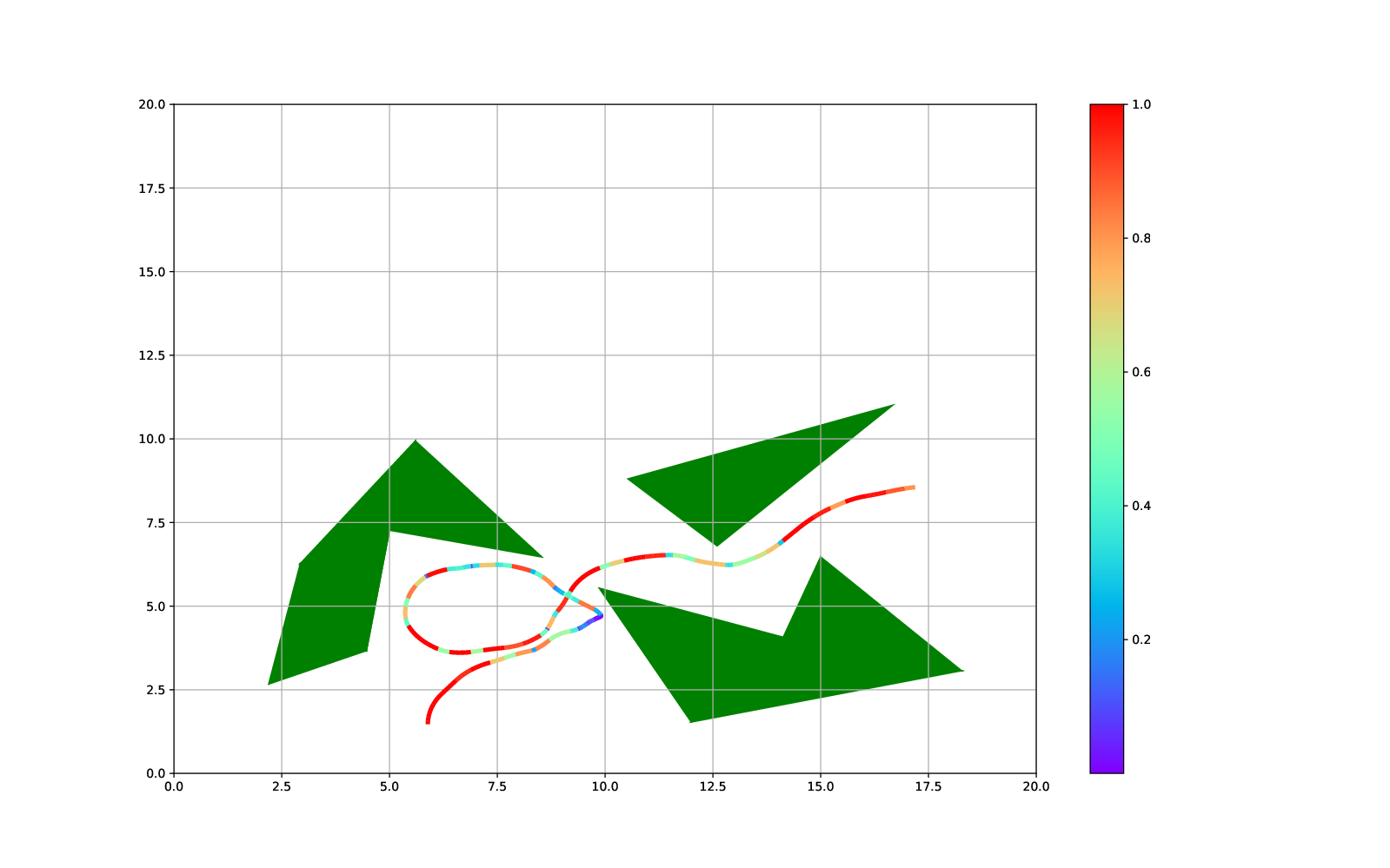} 
    
    \caption{{Six simulated 20m$\times$20m environments (Environments 1-6, from left to right) where black squares represent obstacles in the first five, and the sixth environment features non-convex obstacles composed of clustered green triangles. The curved trajectories display a blue-to-red gradient indicating increasing linear velocity from low to high.}}
    \label{fig:environments}
    \vspace{-20pt}
\end{figure*}

{The faster approximation of optimal control enables the BOW planner to scale effectively for distributed robot navigation problems. Fig.~\ref{fig:Dynamic_obstacles} demonstrates this capability in a two-UGV scenario where each robot navigates toward its respective goal while treating the other as a dynamic obstacle. The gray UGV targets the gray ``G" while the cyan UGV aims for the cyan ``G". The color coding helps differentiate the robots' paths as they adjust their trajectories to avoid collisions during navigation. For this experiment, pose information was shared between robots, with the planner running on a desktop computer that sent commands to the UGVs.
Fig.~\ref{fig:occupancy_map} illustrates the BOW planner's implementation on a unicycle UGV navigating an environment with static obstacles. The robot uses an onboard laser scanner to detect obstacles and generate an occupancy map, enabling it to avoid obstacles while tracking the trajectory planned by the BOW planner. The UGV's goal location is indicated by the cyan ``G", and the figure shows the planned path as the robot navigates around obstacles to reach its destination.}
\begin{figure}[h!]
\centering
\begin{subfigure}[t]{0.21\textwidth}
\centering
\includegraphics[width=\textwidth, trim={5.0cm 2.0cm 10.0cm 3.0cm}, clip]{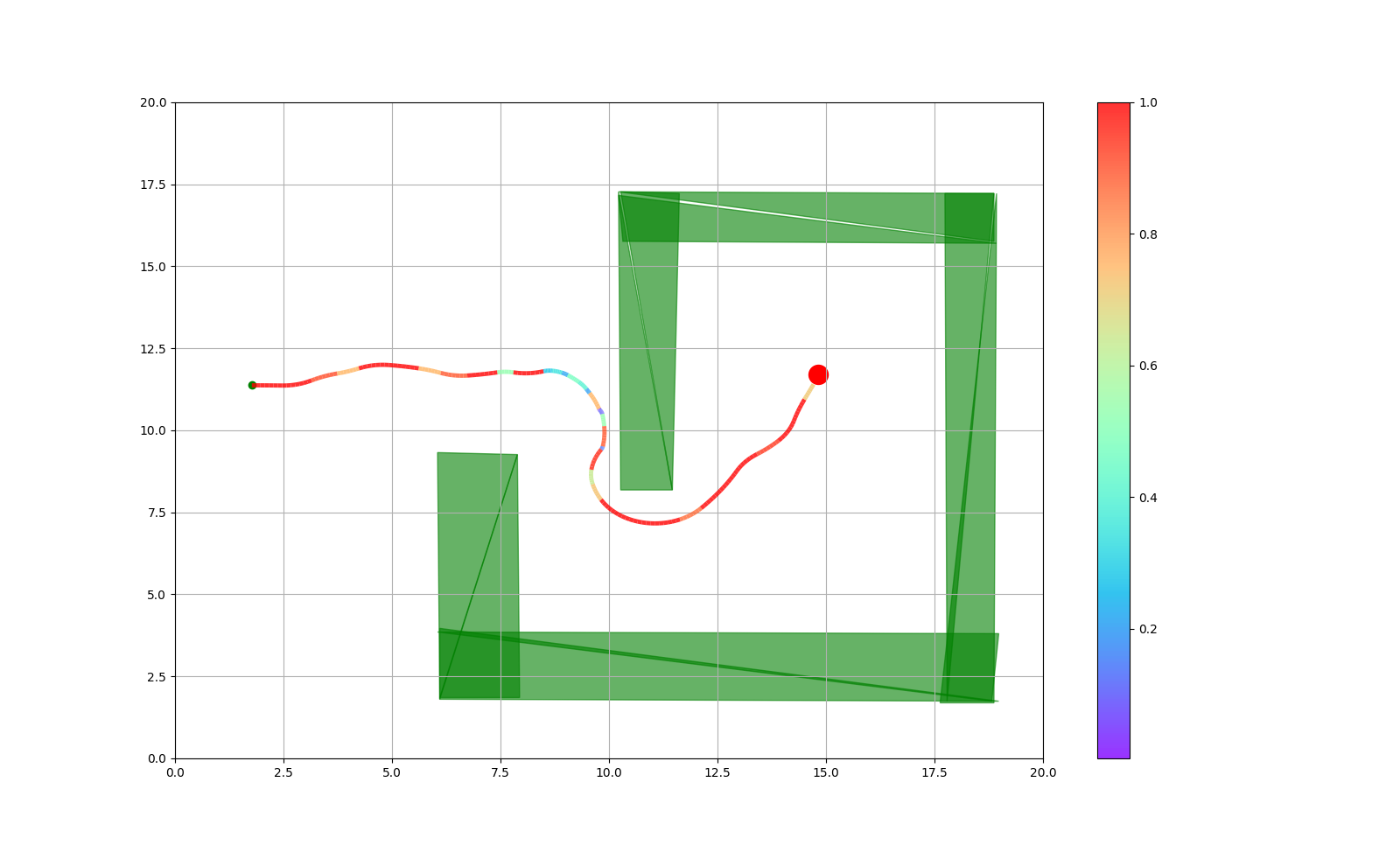}
\caption{Bugtrap Environment}
\label{fig:bugtrap}
\end{subfigure}
\hfill
\begin{subfigure}[t]{0.21\textwidth}
\centering
\includegraphics[width=\textwidth, trim={5.0cm 2.0cm 10.0cm 3.0cm}, clip]{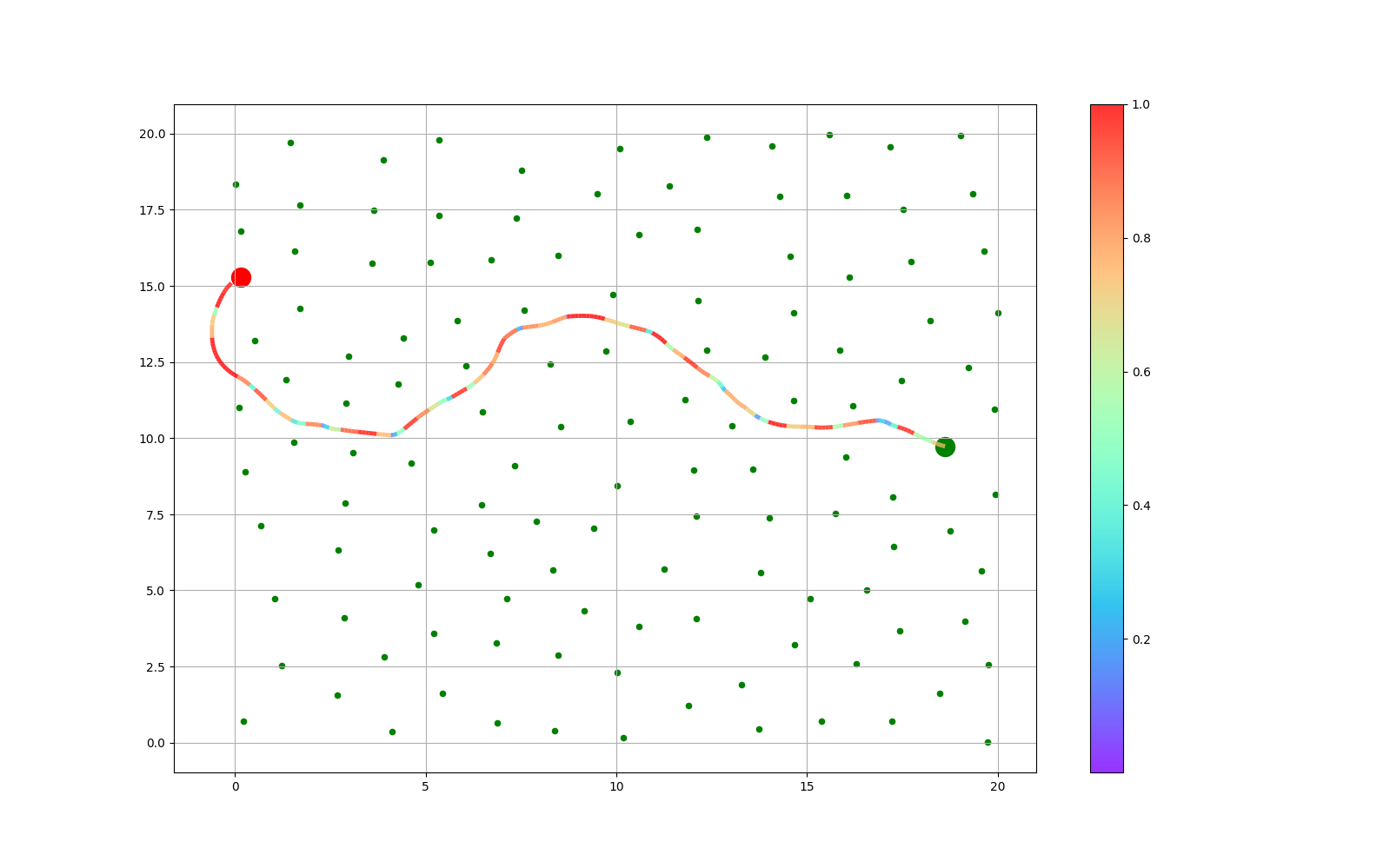}
\caption{Forest UGV Trial}
\label{fig:forest_trial}
\end{subfigure}
\caption{The BOW planner is evaluated for UGV navigation in Bugtrap environments and Poisson forest environments featuring minimum obstacle density of 1.5 per $m^2$.}
\label{fig:challenging_environments_uav}
\end{figure}
\vspace{-15pt}
\subsection{Benchmark}
{We evaluate the performance of the BOW planner against five established path planning algorithms: RRT~\cite{lavalle1998rapidly}, DWA~\cite{fox1997dynamic}, MPPI~\cite{williams2018information}, HRVO~\cite{snape2011hybrid}, and CBF~\cite{mann2024control}. For each environment and planner combination, we conducted 10 independent runs and calculated the mean and standard deviation of all key metrics except Time/Step to ensure statistical reliability. 

\begin{table*}[t]
    \centering
    \setlength{\tabcolsep}{5pt}
    \begin{tabular}{@{}llcrrrrrr@{}}
        \toprule
        \textbf{Env} & \textbf{Method} & \textbf{Lib + Lang} & \textbf{Steps} & \textbf{Traj. Length (m)} & \textbf{Total Time (ms)} & \textbf{Time / Step (ms)} & {\textbf{Avg Velocity}} & {\textbf{Avg Jerk}} \\
        \midrule
        \multirow{6}{*}{1}
            & BOW & FCL + C++ & 161.2 $\pm$ 2.9 & 14.14 $\pm$ 0.09 & \textbf{23.26 $\pm$ 0.35} & \textbf{0.14} & 0.947 $\pm$ 0.020 & 0.190 $\pm$ 0.024 \\
            & DWA & FCL + C++ & 157.0 $\pm$ 0.00 & 14.33 $\pm$ 0.00 & 103.68 $\pm$ 2.07 & 0.66 & 0.952 $\pm$ 0.00 & 0.031 $\pm$ 0.000 \\
            & HRVO & C++ & 539.0 $\pm$ 0.00 & 15.78 $\pm$ 0.00 & 269.22 $\pm$ 7.21 & 0.50 & 0.293 $\pm$ 0.00 & 0.223 $\pm$ 0.00 \\
            & RRT & FCL + C++ & \textbf{55.2 $\pm$ 8.0} & 17.39 $\pm$ 1.38 & 389.83 $\pm$ 272.41 & 7.06 & 0.796 $\pm$ 0.038 & 0.849 $\pm$ 0.062 \\
            & MPPI & CUDA + Py & 137.0 $\pm$ 0.00 & \textbf{13.98 $\pm$ 0.00} & 5865.90 $\pm$ 182.73 & 42.82 & \textbf{1.654 $\pm$ 0.052} & -1.150 $\pm$ 0.108 \\
            & CBF & Gurobi + Py & 156.0 $\pm$ 0.00 & 14.42 $\pm$ 0.00 & 9892.10 $\pm$ 97.26 & 63.41 & 0.992 $\pm$ 0.010 & \textbf{-0.020 $\pm$ 0.001} \\
        \midrule
        \multirow{5}{*}{2}
            & BOW & FCL + C++ & 160.5 $\pm$ 3.2 & \textbf{14.21 $\pm$ 0.08} & \textbf{23.09 $\pm$ 0.59} & \textbf{0.14} & 0.955 $\pm$ 0.022 & 0.188 $\pm$ 0.026 \\
            & DWA & FCL + C++ & 332.0 $\pm$ 0.00 & 30.26 $\pm$ 0.00 & 83.15 $\pm$ 1.78 & 0.25 & 1.032 $\pm$ 0.000 & \textbf{0.040 $\pm$ 0.000} \\
            & HRVO & C++ & 341.0 $\pm$ 0.00 & 14.87 $\pm$ 0.00 & 282.89 $\pm$ 7.65 & 0.83 & 0.436 $\pm$ 0.00 & 0.137 $\pm$ 0.00 \\
            & RRT & FCL + C++ & \textbf{57.0 $\pm$ 10.2} & 17.30 $\pm$ 2.56 & 464.82 $\pm$ 87.55 & 8.15 & 0.765$ \pm$ 0.036 & 0.859$ \pm$ 0.085\\
            & MPPI & CUDA + Py & 156.0 $\pm$ 0.00 & 14.38 $\pm$ 0.00 & 6625.00 $\pm$ 146.45 & 42.47 & \textbf{1.467 $\pm$ 0.032} & -2.647 $\pm$ 0.172 \\
            & CBF & Gurobi + Py & — & — & — & — & — & — \\
        \midrule
        \multirow{6}{*}{3}
            & BOW & FCL + C++ & 199.2 $\pm$ 3.8 & 17.48 $\pm$ 0.06 & \textbf{28.14 $\pm$ 0.58} & \textbf{0.14} & 0.939 $\pm$ 0.020 & 0.190 $\pm$ 0.016 \\
            & DWA & FCL + C++ & 220.0 $\pm$ 0.00 & 19.14 $\pm$ 0.00 & 102.50 $\pm$ 1.83 & 0.47 & 0.889 $\pm$ 0.000 & 0.028 $\pm$ 0.00 \\
            & HRVO & C++ & 422.0 $\pm$ 0.00 & 20.07 $\pm$ 0.00 & 468.44 $\pm$ 10.51 & 1.11 & 0.476 $\pm$ 0.00 & 0.302 $\pm$ 0.000 \\
            & RRT & FCL + C++ & \textbf{69.2 $\pm$ 12.7} & 20.69 $\pm$ 2.63 & 808.09 $\pm$ 533.78 & 11.68 & 0.783 $\pm$ 0.022 & 0.886 $\pm$ 0.050 \\
            & MPPI & CUDA + Py & 150.0 $\pm$ 0.00 & \textbf{17.27 $\pm$ 0.00} & 6496.80 $\pm$ 346.90 & 43.31 & \textbf{1.774 $\pm$ 0.089} & 2.062 $\pm$ 0.291 \\
            & CBF & Gurobi + Py & 151.0 $\pm$ 0.00 & 17.44 $\pm$ 0.00 & 12146.60 $\pm$ 185.49 & 80.44 & 0.930 $\pm$ 0.014 & \textbf{-0.018 $\pm$ 0.001} \\
        \midrule
        \multirow{6}{*}{4}
            & BOW & FCL + C++ & 210.8 $\pm$ 6.5 & 18.67 $\pm$ 0.12 & \textbf{30.90 $\pm$ 0.68} & \textbf{0.15} & 0.945 $\pm$ 0.036 & 0.172 $\pm$ 0.028 \\
            & DWA & FCL + C++ & 373.0 $\pm$ 0.00 & 34.33 $\pm$ 0.00 & 169.20 $\pm$ 3.58 & 0.45 & 0.970 $\pm$ 0.00 & 0.030 $\pm$ 0.00 \\
            & HRVO & C++ & 465.0 $\pm$ 0.00 & 21.06 $\pm$ 0.00 & 676.32 $\pm$ 12.82 & 1.45 & 0.453 $\pm$ 0.00 & 0.341 $\pm$ 0.000 \\
            & RRT & FCL + C++ & \textbf{77.9 $\pm$ 10.0} & 23.68 $\pm$ 2.67 & 745.79 $\pm$ 406.64 & 9.57 & 0.769 $\pm$ 0.021 & 0.898 $\pm$ 0.072 \\
            & MPPI & CUDA + Py & 198.0 $\pm$ 0.00 & 18.86 $\pm$ 0.00 & 8674.80 $\pm$ 156.41 & 43.81 & \textbf{1.467 $\pm$ 0.026} & 0.194 $\pm$ 0.010 \\
            & CBF & Gurobi + Py & 171.0 $\pm$ 0.00 & \textbf{18.09 $\pm$ 0.00} & 15135.40 $\pm$ 335.45 & 88.51 & 0.805 $\pm$ 0.018 & \textbf{-0.002 $\pm$ 0.000} \\
        \midrule
        \multirow{4}{*}{5}
            & BOW & FCL + C++ & 95.0 $\pm$ 5.4 & \textbf{7.56 $\pm$ 0.19} & \textbf{12.89 $\pm$ 0.09} & \textbf{0.14} & \textbf{1.007 $\pm$ 0.077} & 0.357 $\pm$ 0.052 \\
            & DWA & FCL + C++ & — & — & — & — & — & — \\
            & HRVO & C++ & 204.0 $\pm$ 0.00 & 8.34 $\pm$ 0.00 & 369.20 $\pm$ 2.49 & 1.81 & 0.409 $\pm$ 0.000 & \textbf{0.260 $\pm$ 0.00} \\
            & RRT & FCL + C++ & \textbf{31.7 $\pm$ 3.7} & 9.53 $\pm$ 0.80 & 93.18 $\pm$ 56.97 & 2.94 & 0.785 $\pm$ 0.035 & 0.927 $\pm$ 0.100 \\
            & MPPI & CUDA + Py & 139.0 $\pm$ 0.00 & 8.86 $\pm$ 0.00 & 5983.00 $\pm$ 183.18 & 43.04 & 0.691 $\pm$ 0.021 & -0.930 $\pm$ 0.086 \\
            & CBF & Gurobi + Py & — & — & — & — & — & — \\
        \midrule
        \multirow{4}{*}{6}
            & BOW & FCL + C++ & 101.9 $\pm$ 5.1 & \textbf{8.73 $\pm$ 0.09} & \textbf{10.00 $\pm$ 0.33} & \textbf{0.10} & \textbf{0.940 $\pm$ 0.063} & 0.158 $\pm$ 0.029 \\
            & DWA & FCL + C++ & 556.0 $\pm$ 0.00 & 25.36 $\pm$ 0.00 & 589.78 $\pm$ 4.04 & 1.06 & 0.556 $\pm$ 0.00 & \textbf{0.043 $\pm$ 0.00} \\
            & HRVO & C++ & — & — & — & — & — & — \\
            & RRT & FCL + C++ & \textbf{36.1 $\pm$ 8.1} & 10.54 $\pm$ 1.56 & 128.16 $\pm$ 79.40 & 3.55 & 0.757 $\pm$ 0.041 & 0.811 $\pm$ 0.060 \\
            & MPPI & CUDA + Py & 1048.0 $\pm$ 0.00 & 29.94 $\pm$ 0.00 & 44312.20 $\pm$ 645.95 & 42.28 & 0.056 $\pm$ 0.001 & 0.421 $\pm$ 0.018 \\
            & CBF & Gurobi + Py & — & — & — & — & — & — \\
        \bottomrule
    \end{tabular}
    \caption{Comparison of path planning methods across six environments. Best values for each environment are highlighted in \textbf{bold}. A dash (—) indicates that no solution exists for that planner in the specific environment.}
    \label{tab:comparison}
    \vspace{-10pt}
\end{table*}
Evaluation across six distinct environments, as shown in Fig. \ref{fig:environments}, reveals that the BOW planner, using only 5 samples per planning horizon, consistently achieves the fastest total planning times \textemdash ranging from 10.00 $\pm$ 0.33 ms to 30.90 $\pm$ 0.68 ms, and per-step execution times between 0.10 and 0.15 ms, while maintaining competitive safe-trajectory lengths from 7.56 $\pm$ 0.19 m to 18.67 $\pm$ 0.12 m.
In contrast, while MPPI produced the shortest paths in Envs. 1 and 3 with trajectories of 13.98 $\pm$ 0.00 m and 17.27 $\pm$ 0.00 m  respectively, and CBF excelled in Env. 4 with 18.09 $\pm$ 0.00 m, both methods showed substantially higher computational demands with CBF requiring 9892.10 $\pm$ 97.26 ms to 15135.40 $\pm$ 335.45 ms total planning time and 63.41 to 88.51 ms per step, while MPPI demanded 5865.90 $\pm$ 182.73 ms to 44312.20 $\pm$ 645.95 ms total time and 42.28 to 43.81 ms per step, highlighting significant efficiency trade-offs. Critical completeness issues emerged across algorithms, with CBF failing to generate solutions in Envs. 2, 5, and 6, DWA unable to solve Env. 5, and HRVO failing in Env. 6, while BOW maintained perfect completeness across all test scenarios. RRT consistently required the fewest planning steps ranging from 31.7 $\pm$ 3.7 to 77.9 $\pm$ 10.0 but produced longer trajectories, while MPPI showed poor scalability, requiring up to 1048.0 $\pm$ 0.00 steps in Env. 6 compared to 137.0 $\pm$ 0.00 in simpler environments. Additionally, velocity and smoothness characteristics varied significantly across planners, with MPPI achieving the highest average velocities in Envs. 1-4 ranging from 1.467 $\pm$ 0.026 to 1.774 $\pm$ 0.089 m/s, while BOW maintained consistent velocity performance between 0.939 $\pm$ 0.020 to 1.007 $\pm$ 0.077 m/s across all environments. 
In terms of trajectory smoothness, CBF achieved the lowest jerk values ($-0.020 \pm 0.001$ to $-0.002 \pm 0.000$), indicating very smooth paths, followed by DWA ($0.028 \pm 0.000$ to $0.043 \pm 0.000$). BOW showed moderate smoothness ($0.158 \pm 0.029$ to $0.357 \pm 0.052$), while MPPI exhibited highly variable performance ($-2.647 \pm 0.172$ to $2.062 \pm 0.291$). RRT consistently had the highest jerk values ($0.811 \pm 0.060$ to $0.927 \pm 0.100$), indicating the least smooth trajectories.
These results show that the BOW planner is well-suited for resource-constrained, real-time applications requiring robust performance in diverse environments.
}

\begin{figure*}[t]
    \centering
    \includegraphics[width=0.19\linewidth,trim={0 20pt 0 0pt},clip]{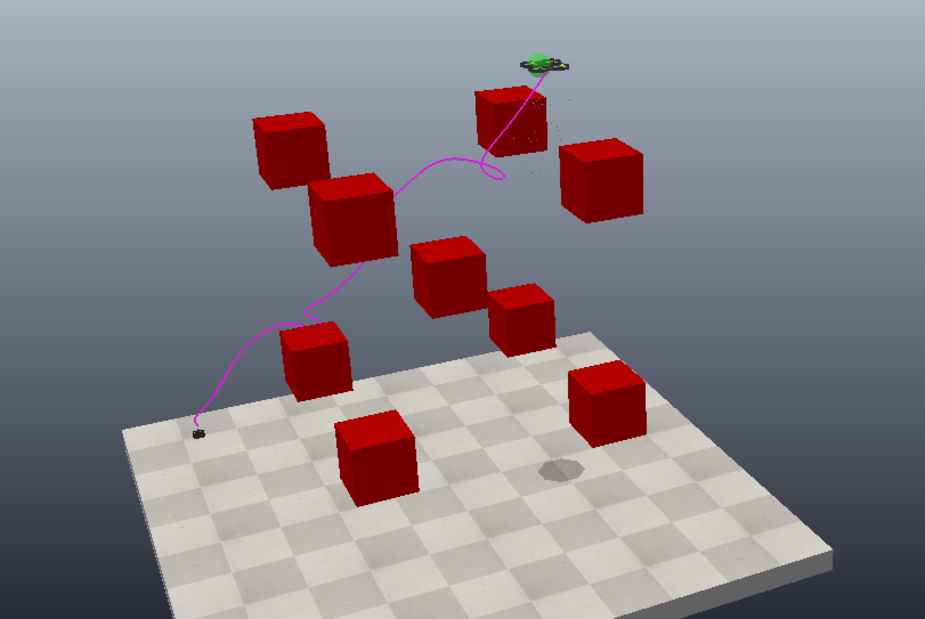} \hfill
    \includegraphics[width=0.19\linewidth,trim={0 30pt 0pt 10pt},clip]{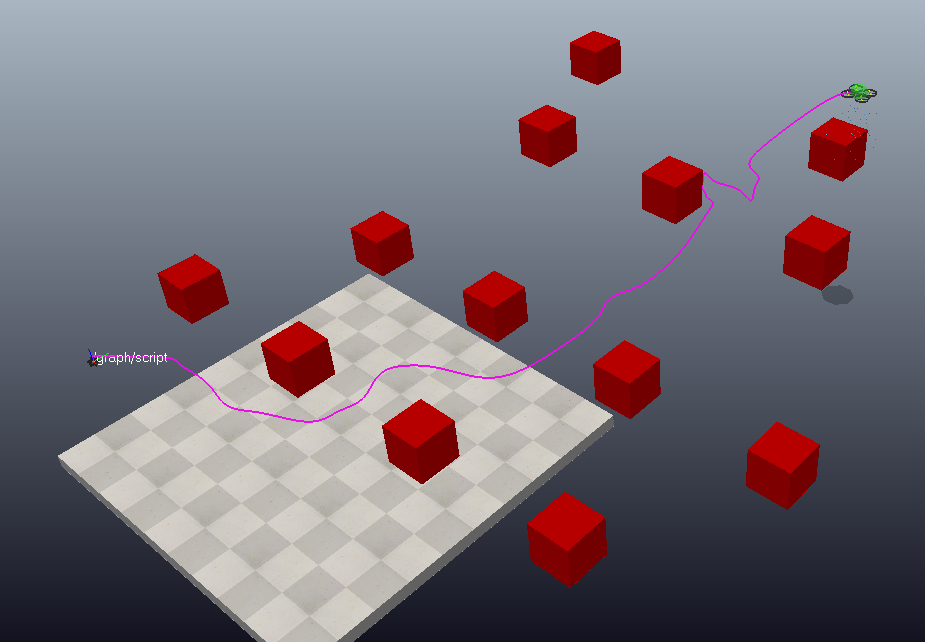} \hfill
    \includegraphics[width=0.19\linewidth,trim={0pt 0pt 0 10pt},clip]{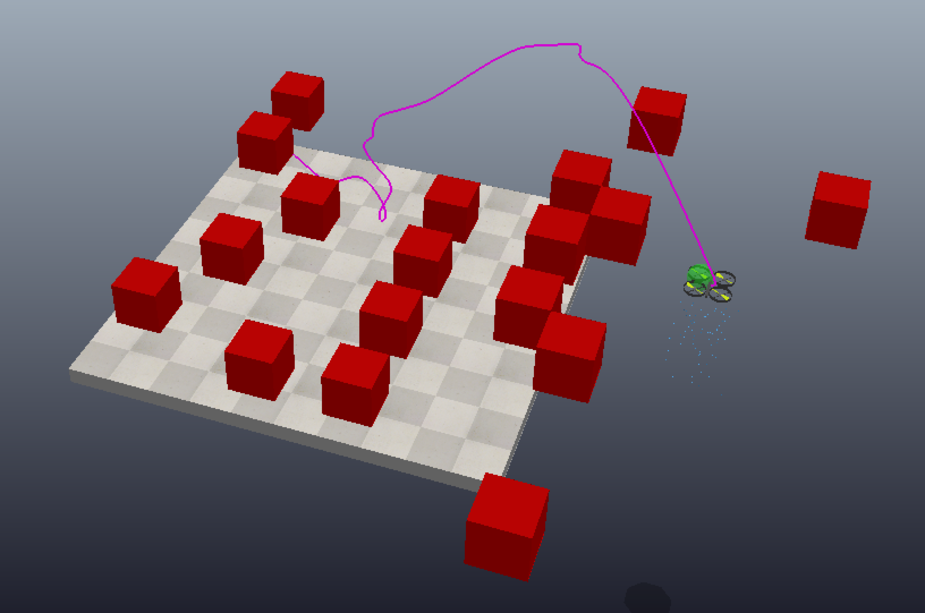} \hfill
    \includegraphics[width=0.19\linewidth,trim={0pt 0pt 0 0pt},clip]{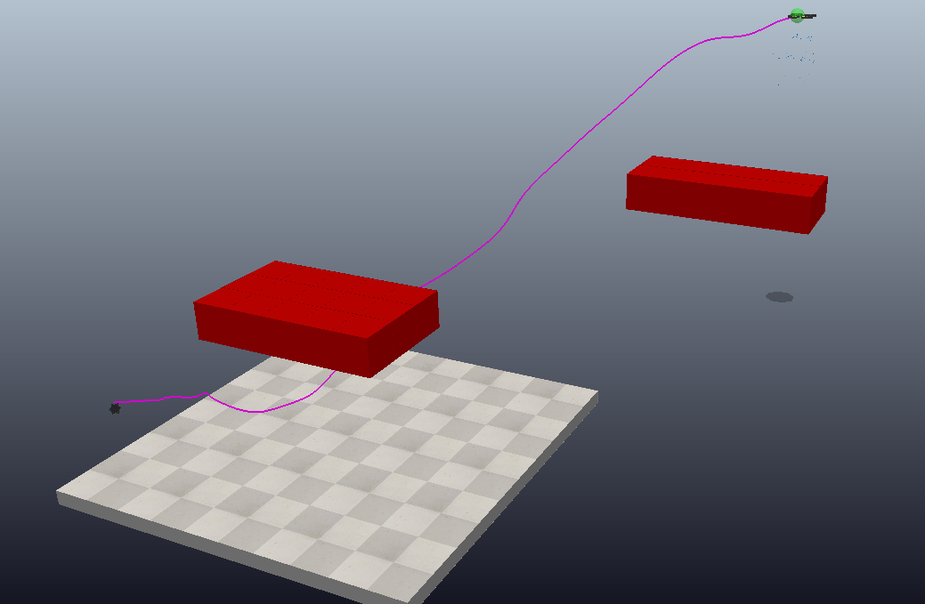} \hfill
    \includegraphics[width=0.19\linewidth,trim={50pt 130pt 0 80pt},clip]{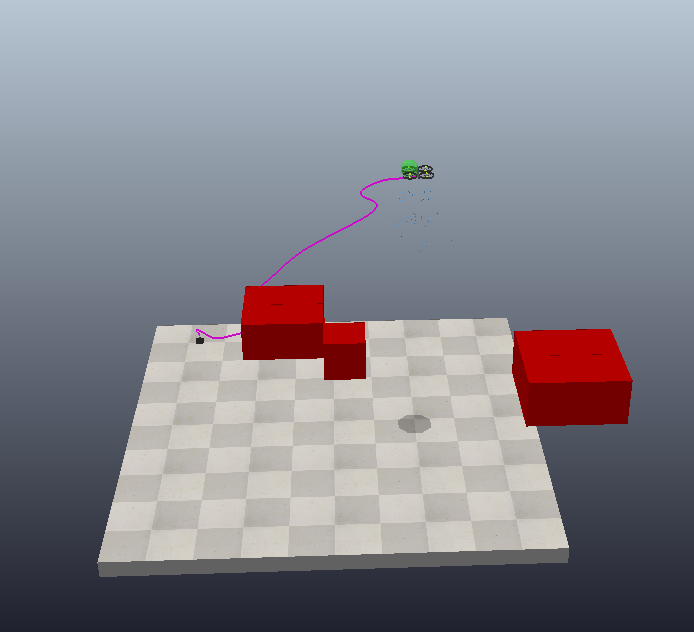} 
    \caption{Simulated 3D trajectory planning scenarios for a UAV using the BOW planner in Coppeliasim Simulator, showing five different environments.}
    \label{fig:bow_drn}
    \vspace{-15 pt}
\end{figure*}
\vspace{-5pt}
\subsection{Unmanned Aerial Vehicle Experiments}
To showcase the BOW planner's scalability in high-dimensional motion planning problems, we applied it to a 6-DOF aerial robot operating in a 3-D environment with 3-D obstacles (Fig.~\ref{fig:bow_drn}). The robot's state is represented by a 12-D vector $\mathbf{x} = (x, y, z, \theta, \dot{x}, \dot{y}, \dot{z}, \dot{\theta}, \ddot{x}, \ddot{y}, \ddot{z}, \ddot{\theta})$, encompassing position, orientation, velocity, and acceleration. The control input is a 4-D vector $\mathbf{u} = (\dot{x}_c, \dot{y}_c, \dot{z}_c, \dot{\theta}_c)$, consisting of linear velocities and yaw rate relative to the body frame. We evaluated the BOW planner using a quadrotor UAV in both simulations and real-world experiments.
Fig.~\ref{fig:bow_drn} depicts five distinct simulated environments, with red cubes representing obstacles. In each scenario, an Unmanned Aerial Vehicle (UAV) starts from an initial position and navigates to a goal location using the BOW planner. These simulations were conducted in the CoppeliaSim Simulator.

Fig.~\ref{fig:real_obs} illustrates the implementation of the BOW planner in a UAV's navigation through two {different problems in terms of start and goal locations. Although the BOW planner is lightweight enough to run on edge devices like the Raspberry Pi or Jetson Nano, it was executed on a desktop due to the off-the-shelf UAV used, with control commands sent to the UAV during the experiments.} In these scenarios, obstacles are represented by yellow cubes. The goal locations are marked with red labels ``\textbf{G1}" and ``\textbf{G2}", while the start locations are denoted by rose-colored labels ``\textbf{S1}" and ``\textbf{S2}". The figure shows the UAV's path as it successfully avoids obstacles to reach its destination.

\begin{figure}[tb]
    \centering
    \begin{subfigure}[b]{0.48\linewidth} 
      \includegraphics[width=\linewidth, height=3.5cm]{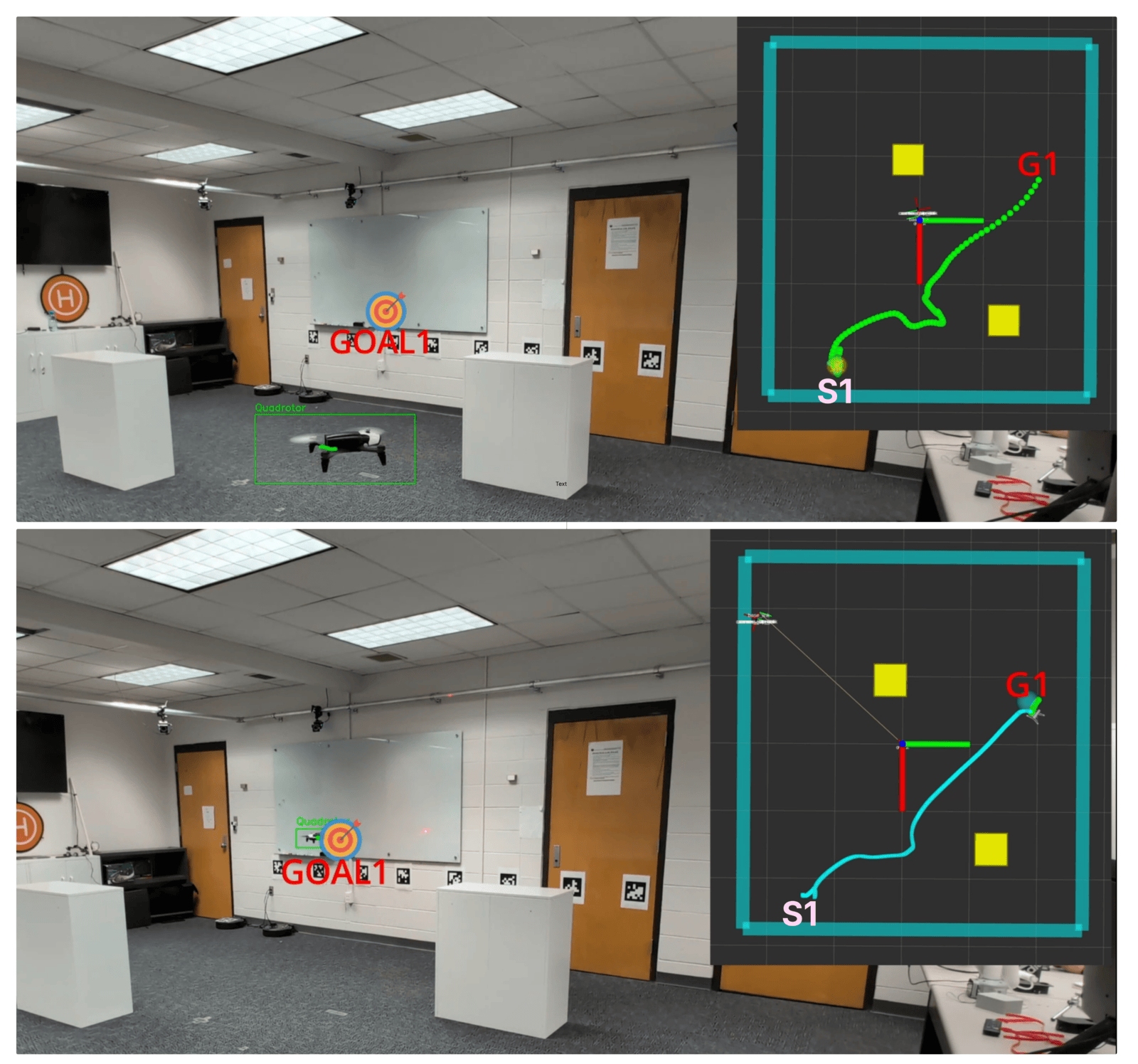}
      \centering
      \caption{}
      \label{fig:drn_real1}
    \end{subfigure}
    ~
    \begin{subfigure}[b]{0.48\linewidth} 
      \includegraphics[width=\linewidth, height=3.5cm]{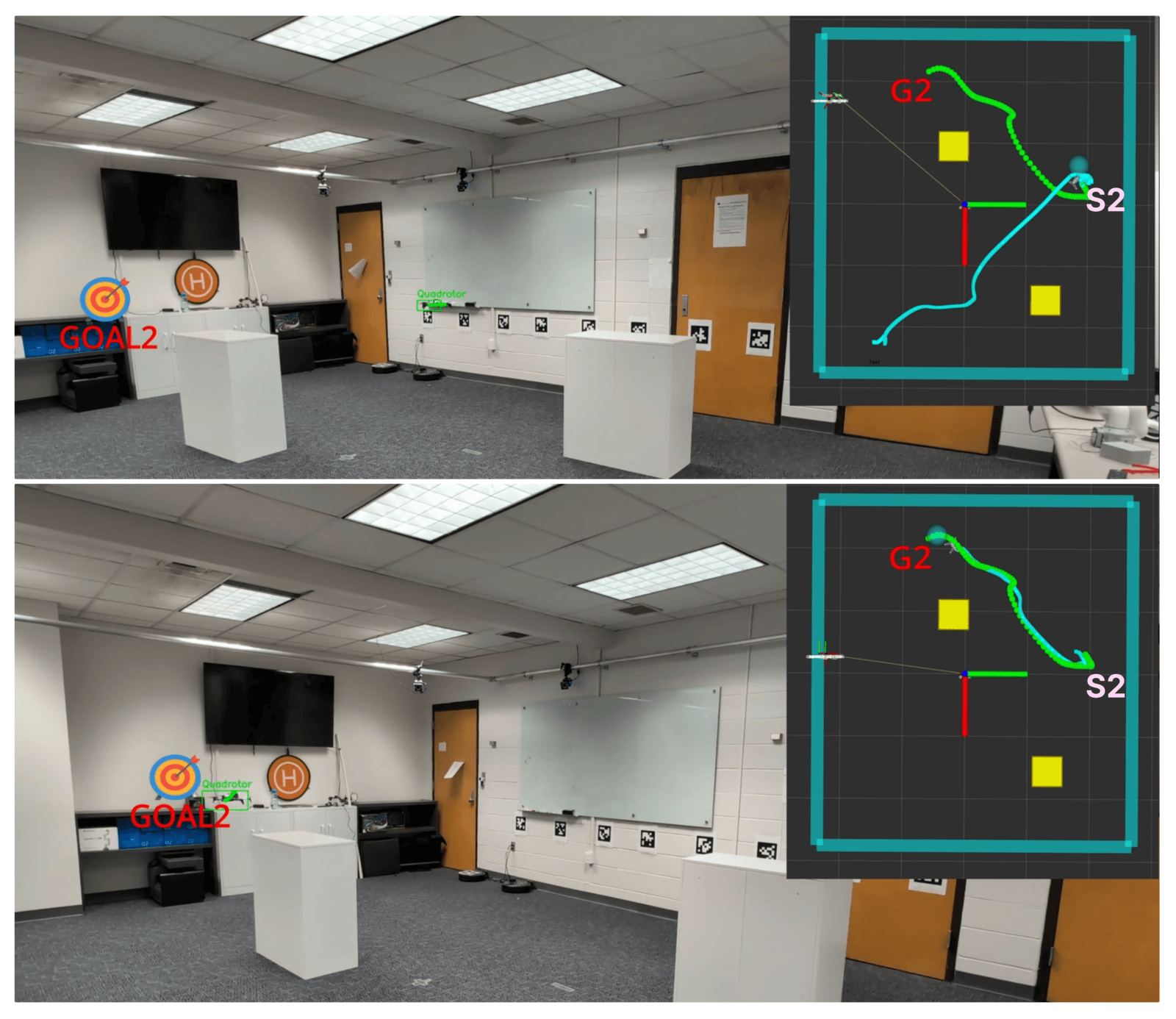}
      \centering
      \caption{}
      \label{fig:drn_real2}
    \end{subfigure}
\caption{
    The BOW planner can enhance the autonomous navigation of UAV in real-world scenarios.
 }
\label{fig:real_obs}
\end{figure}
Fig.~\ref{fig:drn_real1} and Fig.~\ref{fig:drn_real2} illustrate the UAV's generated path from start to goal location in two real environments with obstacles.
These results demonstrate the UAV’s obstacle avoidance capabilities and its ability to navigate to specific destinations in a 3D environment.

\begin{table}[hbt]
    \centering
    \renewcommand{\arraystretch}{0.9} 
    \setlength{\tabcolsep}{1pt} 
    \begin{tabular}{lccccc}
        \toprule
        \textbf{Env.} & \textbf{Density} & \textbf{Dispersity} & \textbf{Total Time (ms)} & \textbf{Path Length (m)} & \textbf{Steps} \\
        \midrule
        1 & 0.14  & 2.38 & 10.81 $\pm$ 0.43 & 6.73 $\pm$ 1.14  & 132.70 $\pm$ 33.95 \\
        2 & 0.04  & 4.15 & 12.07 $\pm$ 0.31 & 10.76 $\pm$ 0.90 & 147.90 $\pm$ 25.30 \\
        3 & 0.05  & 3.35 & 12.84 $\pm$ 0.49 & 10.57 $\pm$ 1.01 & 152.60 $\pm$ 30.13 \\
        4 & 0.12  & 3.35 & 11.71 $\pm$ 0.47 & 10.77 $\pm$ 0.57 & 151.90 $\pm$ 23.53 \\
        5 & 0.09  & 2.63 & 5.64 $\pm$ 0.18  & 5.78 $\pm$ 1.60  & 89.50 $\pm$ 38.44  \\
        \bottomrule
    \end{tabular}
    \caption{Performance Analysis of the BOW Planner for Quadrotor UAV Navigation Across Varied Obstacle Densities and Dispersities.}\label{tab:quadrotor}
    \label{tab:results}
\end{table}
Table~\ref{tab:quadrotor} summarizes the BOW planner's performance for a quadrotor UAV across five environments (Fig.~\ref{fig:bow_drn}) with varying obstacle densities and dispersities. Higher occupied area ratios (e.g., Env. 1: 0.14; Env. 4: 0.12) led to shorter paths but moderate computation and node expansion. In contrast, lower occupied area ratios (e.g., Env. 2: 0.04) resulted in longer paths and greater exploration. High dispersity (e.g., Env. 2: 4.15) increased planning time, indicating greater global search. Env. 5 showed the lowest planning time (5.64~ms), likely due to moderate density (0.09) and dispersity (2.63). Overall, BOW performs best in environments with moderate density and dispersity, balancing path quality and efficiency.

To further validate our methodology, we conducted additional evaluations in complex environments, specifically Bugtrap and Poisson forest settings, as illustrated in Fig.~\ref{fig:challenging_environments_uav}. Complete experimental results for both terrestrial and aerial platforms can be found on our project website.
     
    \section{Conclusion}

We introduced the BOW planner, a novel contribution to local motion planning in complex, non-convex, and high-dimensional environments through the integration of constrained Bayesian optimization. Grounded in a proven theoretical foundation, the method combines a window-based motion planner with constrained Bayesian optimization to produce near-optimal control policies while minimizing evaluations of computationally expensive constrained objectives. Extensive simulation benchmarks and real-world experiments demonstrate that the BOW planner achieves notable improvements in sample efficiency, trajectory length, and computation time compared to a range of existing local planners. Its deployment across diverse robotic platforms further highlights its online adaptability and safety-aware optimization. The planner is released as open-source software, facilitating adoption and extension by researchers and practitioners engaged in high-dimensional motion planning.

Like most local planners, the BOW planner employs a cost-to-go heuristic to efficiently find feasible trajectories, which may result in getting stuck in local minima when the heuristic is inadmissible (e.g., in narrow passage problems). Future research will focus on combining tree-based search techniques with the BOW planner to address global motion planning problems.

    \bibliography{main}
    \bibliographystyle{IEEEtran}

\end{document}